\theoremstyle{plain}
\newtheorem{theorem}{Theorem}[section]
\newtheorem{proposition}[theorem]{Proposition}
\theoremstyle{definition}
\newtheorem{definition}[theorem]{Definition}
\theoremstyle{remark}
\icmltitlerunning{Signature-Informed Transformer for Asset Allocation}
\begin{document}

\twocolumn[
  \icmltitle{Signature-Informed Transformer for Asset Allocation}

  \icmlsetsymbol{equal}{*}

  \begin{icmlauthorlist}
    \icmlauthor{Yoontae Hwang}{pusan}
    \icmlauthor{Stefan Zohren}{oxford}
  \end{icmlauthorlist}

  \icmlaffiliation{pusan}{Pusan National University, Busan, Republic of Korea}
  \icmlaffiliation{oxford}{University of Oxford, Oxford, United Kingdom}

  \icmlcorrespondingauthor{Stefan Zohren}{stefan.zohren@eng.ox.ac.uk} 

  \icmlkeywords{Machine Learning, Asset Allocation, Trading}

  \vskip 0.3in
]

\printAffiliationsAndNotice{}  

Modern deep learning for asset allocation typically separates forecasting from optimization. We argue this creates a fundamental mismatch where minimizing prediction errors fails to yield robust portfolios. We propose the Signature Informed Transformer to address this by unifying feature extraction and decision making into a single policy. Our model employs path signatures to encode complex path dependencies and introduces a specialized attention mechanism that targets geometric asset relationships. By directly minimizing the Conditional Value at Risk we ensure the training objective aligns with financial goals. We prove that our attention module rigorously amplifies signature derived signals. Experiments across diverse equity universes show our approach significantly outperforms both traditional strategies and advanced forecasting baselines. The code is available at: https://anonymous.4open.science/r/Signature-Informed-Transformer-For-Asset-Allocation-DB88

\section{Introduction}\label{sec:intro}

A central challenge in modern quantitative finance is strategic asset allocation: the dynamic construction of portfolios that are robust to the complex, non-linear behavior of financial markets \citep{Markowitz1952}. While foundational theories provided a basis for optimization, their assumptions of static correlations and normally distributed returns are often not adequate for navigating the non-stationary and path-dependent nature of today's markets \citep{cont2001empirical, fama1970efficient}. Deep learning offers a powerful toolkit to address these complexities, yet developing policies that yield stable, real-world performance remains a formidable task.

\begin{figure}[ht]
\small{
  \centering
  \includegraphics[width=0.85\linewidth]{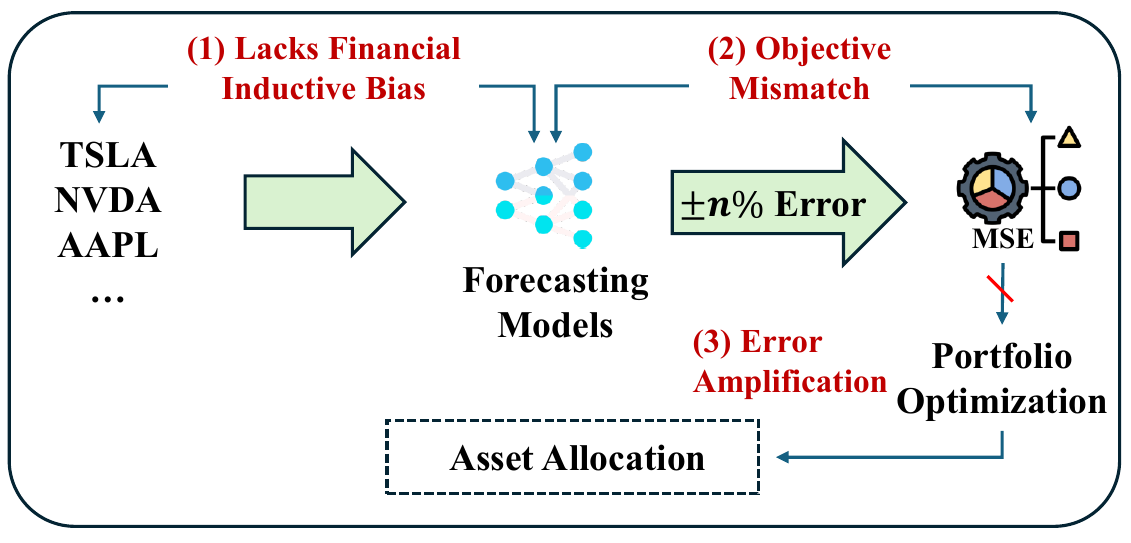}
  \caption{A depiction of flawed strategies for asset allocation}
  \label{figure_1}
  \vspace{-10 pt}}    
\end{figure}

The predominant deep learning paradigm for this problem, illustrated in Figure ~\ref{figure_1}, is a decoupled, two-stage pipeline: a forecasting model first predicts asset returns, and these predictions are then fed into a downstream portfolio optimizer \citep{moody2001learning}. This approach has drawbacks and suffers from two critical issues. First, the forecasting models typically employed are general-purpose architectures. They lack the financial inductive biases necessary to model the idiosyncratic structures of financial markets, such as the intricate lead-lag relationships between assets. Without a model architecture that explicitly reflects market dynamics, such models struggle to distinguish genuine signals from noise. Second, and more critically, this pipeline creates an objective mismatch that leads to error amplification. The forecaster is trained to minimize a statistical metric like the Mean Squared Error (MSE), i.e.\ the average squared difference between estimated and actual values. This objective is agnostic to the downstream task of portfolio construction, where even minuscule prediction errors can be magnified by the optimizer into volatile and impractical portfolio weights. Furthermore, an MSE objective implicitly incentivizes the model to favor assets that are easier to predict, potentially not considering harder-to-predict assets with larger estimation errors and distorting the final allocation. We argue that a robust solution requires moving beyond this fragile pipeline. The challenge is to develop a unified policy that learns an end-to-end mapping from market data directly to portfolio weights while being architecturally designed to model the known geometric properties of financial time series \citep{buehler2019deep, hwang2025decision}.

To this end, we introduce the \textbf{Signature-informed Transformer (SIT)}, a deep learning framework designed to learn robust, multi-asset allocation policies by directly addressing these challenges. SIT's contributions are unified within a synergistic architecture built on three pillars:
\begin{enumerate}
    \vspace{-8 pt}    
    \item \textbf{Path-wise Feature Representation:} To better capture the complex dynamics of assets, the model generates features from each asset's price history using Rough Path Signatures. This technique offers a principled summary of a path's shape, encoding its trends and oscillations to provide a richer basis for decision-making \citep{lyons1998differential, lyons2022signature}.
    \vspace{-6 pt} 
    \item \textbf{Signature-Augmented Attention:} For modeling dependencies between assets, the model introduces a novel attention mechanism. It enhances attention scores with a term derived from the signature of asset pairs, which represents a robust measure of their lead-lag relationships \citep{bonnier2019deep}. This allows the model to allocate attention based on geometric interactions, a crucial inductive bias for this problem.
    \vspace{-6 pt} 
    \item \textbf{Decision Alignment:} To align the training process with the goal, the model is optimized directly for the quality of the portfolio allocation. Instead of aiming for statistical forecasting accuracy, its parameters are trained to minimize the Conditional Value-at-Risk (CVaR) of the portfolio's loss distribution, bridging the gap often found in two-stage pipelines.
    
\end{enumerate}

\newcommand{\R}{\mathbb{R}}
\newcommand{\E}{\mathbb{E}}
\newcommand{\Sig}{\text{Sig}}
\newcommand{\CVaR}{\text{CVaR}}
\newcommand{\softplus}{\text{softplus}}
\newcommand{\softmax}{\text{softmax}}
\newcommand{\MLP}{\text{MLP}}
\newcommand{\LayerNorm}{\text{LayerNorm}}
\newcommand{\Dropout}{\text{Dropout}}
\newcommand{\Concat}{\text{Concat}}

\newcommand{\bS}{\mathbf{S}}
\newcommand{\bX}{\mathbf{X}}
\newcommand{\bQ}{\mathbf{Q}}
\newcommand{\bK}{\mathbf{K}}
\newcommand{\bV}{\mathbf{V}}
\newcommand{\bB}{\mathbf{B}}
\newcommand{\bO}{\mathbf{O}}
\newcommand{\bfv}{\mathbf{v}}
\newcommand{\br}{\mathbf{r}}
\newcommand{\bw}{\mathbf{w}}
\newcommand{\bx}{\mathbf{x}}
\newcommand{\be}{\mathbf{e}}
\newcommand{\bs}{\mathbf{s}}
\newcommand{\bq}{\mathbf{q}}
\newcommand{\bmu}{\boldsymbol{\mu}}
\newcommand{\bbeta}{\boldsymbol{\beta}}
\newcommand{\balpha}{\boldsymbol{\alpha}}

\section{Methodology}\label{sec:Method}
\vspace{-4 pt} 
This section introduces the Signature-Informed Transformer (SIT), a novel approach to risk-aware portfolio allocation (Figure~\ref{figure_main}). All relevant literature can be found in the Appendix~\ref{app:related_works}. After a brief overview of the problem and path signatures, we detail the model's core components.

\begin{figure*}[ht]
\small{
  \centering
  \includegraphics[width=0.83\linewidth]{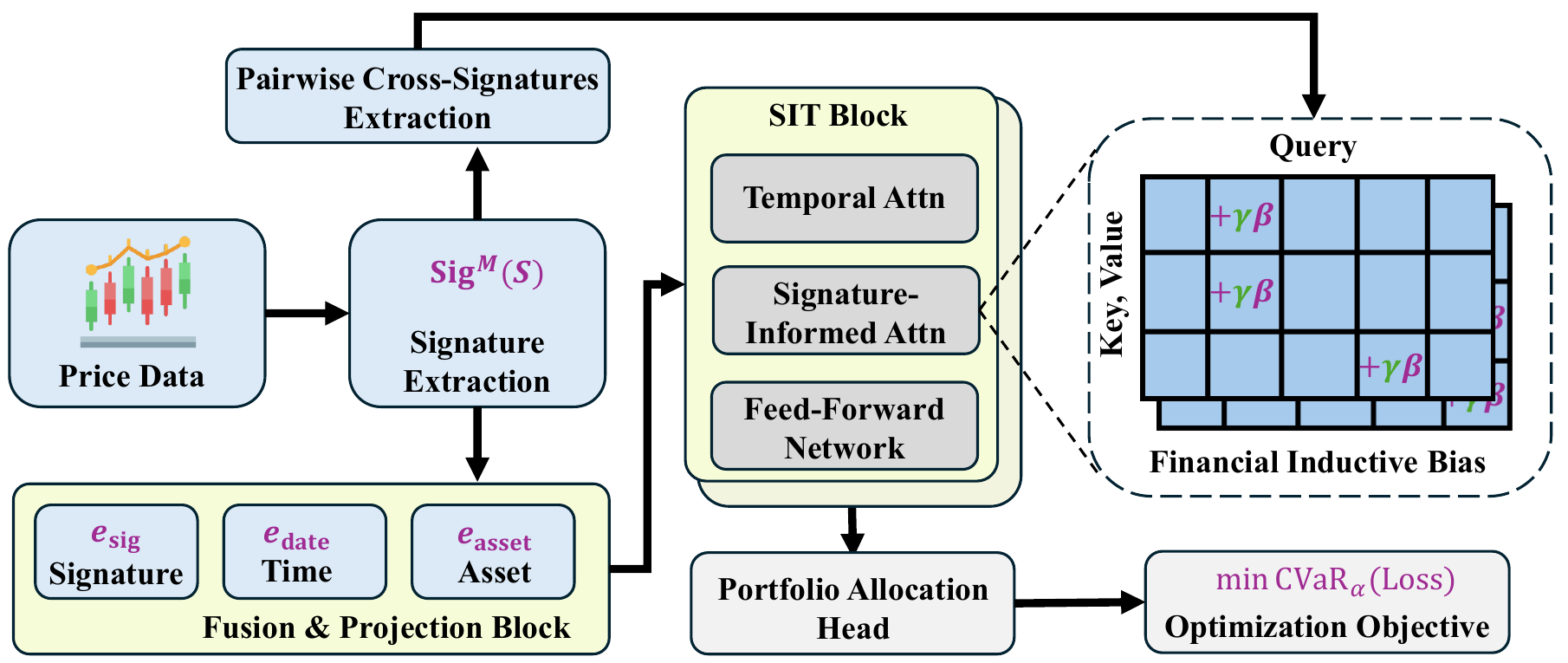}
  \caption{Overview of the Signature-Informed Transformer (SIT) architecture.}
  \label{figure_main}
 \vspace{-12pt}}
\end{figure*}

\subsection{Preliminaries}
\vspace{-4 pt} 
\paragraph{Notations.} Let $0 = t_0 < t_1 < \cdots < t_n = T$ denote a sequence of discrete times over the horizon $[0, T]$. We consider $d$ assets traded in a financial market, with price $S_{t_i}^j(\omega)$ referring to the value of asset $j \in \{1, \ldots, d\}$ at time $t_i$ under a particular market scenario $\omega \in \Omega$. The set $\Omega$ encapsulates all possible market paths. For convenience, we define the continuous-time vector process $\bS_u(\omega) = (S_u^1(\omega), \ldots, S_u^d(\omega)) \in \R^d$, understanding that its values at discrete times $\{t_i\}$ coincide with the observed data $\{\bS_{t_i}\}$. In practice, $\bS_u$ on each interval $[t_i, t_{i+1}]$ can be reconstructed by an appropriate interpolation. A parametric asset allocation strategy is denoted by $\theta \in \Theta$, where $\Theta$ is the set of all feasible parameter configurations. At each decision time $t_i$, the policy outputs a sequence of long-only, fully invested portfolio weight vectors for the next $K$ periods, $\{\mathbf{w}^{(k)}_{t_i}(\theta)\}_{k=1}^{K} \subset \mathbb{R}_+^{d}$, with $\sum_{j=1}^d w^{(k),j}_{t_i}(\theta)=1$ for each $k$. We parameterize each $\mathbf{w}^{(k)}_{t_i}$ via a softmax over the $k$-step-ahead predicted returns, $\mathbf{w}^{(k)}_{t_i}(\theta)=\softmax\!(\hat{\boldsymbol{\mu}}^{(k)}_{t_i}(\theta)/\tau)$, where 
$\hat{\boldsymbol{\mu}}^{1:K}_{t_i}(\theta)\in\mathbb{R}^{K\times d}$ stacks the predictions for $k=1,\ldots,K$. Our objective is to learn $\theta$ so as to maximize cumulative trading gains, subject to uncertainty in market behavior.

A key ingredient in our framework is the use of path signatures to capture high-order variations and cross-asset interactions in price trajectories. For a continuous path $\bX:[s,t]\to \R^d$, the signature $\Sig(\bX_{[s,t]})$ lies in the tensor algebra $\oplus_{k=0}^\infty (\R^d)^{\otimes k}$. When truncated at level $M$, it becomes a finite-dimensional vector denoted as $\Sig^M(\bX_{[s,t]}) = (1, \int_s^t d\bX_u, \int_s^t\int_s^u d\bX_r \otimes d\bX_u, \ldots)$. In our financial context, $\bX$ corresponds to the price process $\bS_t$. First-order signature terms capture net increments for each asset, while second-order terms encode signed areas, revealing non-trivial correlations and lead-lag effects. For clarity, the key notations are provided in Appendix~\ref{sec:appendix_a}.

\begin{proposition}[Strict Lead-Lag Implies Positive Second-Order Signature (cf. \cite{chevyrev2016primer})]
\label{thm:lead-lag-main}
Let $\bX_t = (X^1_t, X^2_t)$ for $t \in [0, T]$ satisfy a strict lead-lag structure of Definition~\ref{def:lead-lag}. Then the second-level signature cross-term
\begin{equation}
    \mathcal{A}(\bX)
    =
    \int_0^T X^1_t  dX^2_t
    -
    \int_0^T X^2_t  dX^1_t
\end{equation}
is strictly positive. In particular, $\mathcal{A}(\bX) > 0$.
\end{proposition}
\begin{proof}
    \vspace{-4 pt} 
    See Appendix~\ref{thm:lead-lag-main-app}.
\end{proof}
\vspace{-4 pt} 

\paragraph{Problem Formulation.}
We frame the task as a sequential decision-making problem under uncertainty. 
At each decision point $t_i$, the objective is to construct portfolios of $d$ assets for each of the next $K$ periods 
$[t_i, t_{i+1}], \ldots, [t_{i+K-1}, t_{i+K}]$.  The information set available at time $t_i$, denoted $\mathcal{F}_{t_i}$, comprises three components: (i) for each asset $j$, a sequence of truncated path signatures $\{\Sig^M(S^j_{[t_{i-H+k-1}, t_{i-H+k}]})\}_{k=1}^H$ over a lookback window of $H$ time steps (ii) pairwise cross-signatures $\Sig^M((S^j, S^l)_{[t_{i-H}, t_i]})$ for all asset pairs $(j,l)$, capturing lead-lag relationships over the entire window and  (iii) a sequence of deterministic calendar feature vectors $\{\bfv_{t_{i-H+k}}\}_{k=1}^H$, where $\bfv_t \in \mathbb{R}^F$. 
Our model, parameterized by $\theta \in \Theta$, learns a mapping 
\begin{equation}
g_\theta:\ \mathcal{F}_{t_i} \longmapsto \hat{\boldsymbol{\mu}}^{1:K}_{t_i}(\theta), 
\qquad \hat{\boldsymbol{\mu}}^{1:K}_{t_i}(\theta) \in \mathbb{R}^{K \times d},    
\end{equation}
which yields $k$-step-ahead expected returns for $k=1,\ldots,K$. Portfolio weights for step $k$ are then obtained via  $\mathbf{w}^{(k)}_{t_i}(\theta) = \softmax\!(\hat{\boldsymbol{\mu}}^{(k)}_{t_i}(\theta) / \tau) \in \mathbb{R}^{d}$, ensuring a long-only, fully invested allocation at each future step. Note that $\hat{\boldsymbol{\mu}}^{1:K}_{t_i}$ is not trained with a prediction loss. It acts as the logits of the allocation layer, and gradients flow only from the portfolio objective below. Let $\br_{t_{i+k}}$ be the vector of realized asset returns over $[t_{i+k-1}, t_{i+k}]$, and define the corresponding portfolio loss  $L^{(k)}_{t_{i+k}}(\theta_\omega) = -(\mathbf{w}^{(k)}_{t_i}(\theta))^\top \br_{t_{i+k}}(\omega)$.  
The parameters $\theta$ are optimized by minimizing the expected Conditional Value-at-Risk (CVaR) of the $K$-step loss sequence within a scenario as
\begin{equation}
\min_{\theta \in \Theta} \ \E_{\omega \sim \mathcal{D}} [ \CVaR_\alpha ( \{ L^{(k)}_{t_{i+k}}(\theta_\omega) \}_{k=1}^{K} ) ].    
\end{equation}
A core assumption of this framework is that the complex, path-dependent market dynamics relevant for forecasting returns are effectively encoded within the signature features.

\subsection{Signature-Informed Transformer (SIT)}

\paragraph{Signature Embeddings.} At a given decision time $t_i$, the initial representation for each asset $j$ and lookback slice $k \in \{1, \dots, H\}$ is constructed by fusing three distinct information sources. First, the truncated path signature of the asset's price history over the slice's interval, $\bs_{k,j} = \Sig^M(S^j_{[t_{i-H+k-1}, t_{i-H+k}]}) \in \R^{d_{\text{sig}}}$, is projected into the model's hidden space $\R^{d_{\text{model}}}$ using a linear layer to form a path embedding $\be_{\text{sig}, k, j}$. Second, the vector of calendar features for that slice, $\bfv_{t_{i-H+k}} \in \R^F$, is projected to create a time embedding $\be_{\text{date}, k} \in \R^{d_{\text{model}}}$, which is shared across all assets for slice $k$. Third, to encode unique, time-invariant characteristics, each asset $j \in \{1, \ldots, d\}$ is assigned a learnable embedding vector $\be_{\text{asset}}^j \in \R^{d_{\text{model}}}$. These three embeddings are concatenated and passed through a final linear projection to produce the input token $\bx_{k,j}$ for the first Transformer layer:
\begin{equation}
    \bx_{k,j} = W_{\text{proj}} [ \be_{\text{sig}, k, j} \oplus \be_{\text{date}, k} \oplus \be_{\text{asset}}^j ] \in \R^{d_{\text{model}}}    
\end{equation}
where $\oplus$ denotes concatenation. The resulting input tensor for time $t_i$, of shape $H \times d \times d_{\text{model}}$, encapsulates pathwise, temporal, and asset-specific information. Note that the signature update uses Chen relation \citep{lyons2007differential, moreno2024rough} and therefore avoids recomputation over the full history.

\paragraph{Signature-Informed Self-Attention.} The core of the model's cross-asset reasoning lies in a novel attention mechanism that operates along the asset dimension, following a standard causal self-attention pass along the temporal dimension within each factorized layer. This Signature-Informed Self-Attention dynamically modifies the attention scores between pairs of assets based on their explicit relational features encoded by path signatures. Let the output of the temporal attention and its subsequent feed-forward network for a given layer be denoted by the tensor $\mathbf{X}' \in \R^{H \times d \times d_{\text{model}}}$. For each time slice $k \in \{1, \ldots, H\}$, we have a set of $d$ asset vectors $\{\bx'_{k,1}, \ldots, \bx'_{k,d}\}$, where each $\bx'_{k,j} \in \R^{d_{\text{model}}}$. The asset-wise attention treats the time dimension as a batch dimension, processing $H$ independent attention calculations.

The mechanism is built upon a standard multi-head self-attention framework with $N_H$ heads. For a given time slice $k$, the collection of asset vectors $\mathbf{X}'_k = (\bx'_{k,1}, \ldots, \bx'_{k,d})^\top \in \R^{d \times d_{\text{model}}}$ is linearly projected to generate queries, keys, and values:
\begin{align}
  \mathbf{Q}_k &= \mathbf{X}'_k W_Q \in \mathbb{R}^{d \times d_{\text{model}}} \\
  \mathbf{K}_k &= \mathbf{X}'_k W_K \in \mathbb{R}^{d \times d_{\text{model}}} \\
  \mathbf{V}_k &= \mathbf{X}'_k W_V \in \mathbb{R}^{d \times d_{\text{model}}}
\end{align}
where $W_Q, W_K, W_V \in \R^{d_{\text{model}} \times d_{\text{model}}}$ are learnable weight matrices. These are then reshaped for multi-head computation, yielding per-head tensors $\bQ_{k,h}, \bK_{k,h}, \bV_{k,h} \in \R^{d \times d_k}$ for each head $h \in \{1, \ldots, N_H\}$, where $d_k = \frac{d_{\text{model}}}{N_H}$. The innovation lies in the computation of an additive bias term.

This bias is a function of both pairwise relational characteristics and current asset states. The first component uses the cross-signature feature over the entire lookback window $[t_{i-H}, t_i]$. For each pair of assets $(j, l)$, we denote the vector representation of this feature as $\mathbf{c}_{i,j,l} \in \R^{d_{\text{cross-sig}}}$. These features, encoding relational information for the pair $(j,l)$, are projected into a specialized embedding space using a dedicated MLP, denoted $\MLP_\beta$, to produce a tensor of relational embeddings, $\bbeta_{i,j,l}$:
\begin{equation}
    \bbeta_{i,j,l} = \MLP_\beta(\mathbf{c}_{i,j,l}) \in \R^{N_H \times d_\beta}
\end{equation}
Here, $d_\beta$ is the bias embedding dimensionality, and a separate embedding is learned for each attention head. The second component introduces dynamism. The query asset's representation from the temporal stage, $\bx'_{k,j}$, is used to generate a dynamic query vector via another MLP,
\begin{equation}
    \bq^{\text{dyn}}_{k,j} = \MLP_q(\bx'_{k,j}) \in \R^{N_H \times d_\beta}
\end{equation}
This vector $\bq^{\text{dyn}}_{k,j}$ represents the \textit{informational need} of asset $j$ at slice $k$. The dynamic attention bias, $b_{k,h,j,l}$, for each head $h$, query asset $j$, and key asset $l$ at time slice $k$, is computed via an inner product as
\begin{equation} \label{dynamic_score}
    b_{k,h,j,l} = \langle (\bq^{\text{dyn}}_{k,j})_h, (\bbeta_{i,j,l})_h \rangle
\end{equation}
where $(\cdot)_h$ denotes the vector for head $h$. This forms a complete bias matrix $\bB_k \in \R^{N_H \times d \times d}$ for each time slice $k$. This allows the model to selectively amplify or suppress attention based on whether a signature-encoded relationship is pertinent to the query asset's current state.

This dynamic bias matrix is scaled by a learnable, strictly positive scalar gate, $\gamma > 0$ (parameterized as $\gamma = \softplus(\hat{\gamma})$), which controls the overall magnitude of the signature-based influence. The final attention logits are:
\begin{equation}  \label{attention_logits}
    \text{Logits}_{k,h} = \frac{\bQ_{k,h} \bK_{k,h}^\top}{\sqrt{d_k}} + \gamma \bB_{k,h} \in \R^{d \times d}
\end{equation}
The attention weights, $\balpha_{k,h} \in \R^{d \times d}$, are obtained by applying the softmax function row-wise. The output for each head is computed by multiplying the attention weights with the value matrix.

\begin{theorem}[Positive directional derivative of attention weight]\label{thm:attention-deriv}
Assume $d\ge 2$, $\gamma>0$, and fix $(k,h,j,l)$.  
Let the query vector $(\bq^{\mathrm{dyn}}_{k,j})_h\in\R^{d_\beta}$ satisfy
$\|(\bq^{\mathrm{dyn}}_{k,j})_h\|_2>0$.  For
\begin{align}
  z_{j,m}&=\frac{(\bQ_{k,h}\bK_{k,h}^\top)_{j,m}}{\sqrt{d_k}}
           +\gamma\langle(\bq^{\mathrm{dyn}}_{k,j})_h,(\bbeta_{i,j,m})_h\rangle \\
  \alpha_{j,m}&=\frac{e^{z_{j,m}}}{\sum_{r=1}^de^{z_{j,r}}},    
\end{align}
assume $0<\alpha_{j,l}<1$.  
Then the directional derivative of $\alpha_{j,l}$ with respect to 
$\bbeta_{i,j,l}$ in the direction $(\bq^{\mathrm{dyn}}_{k,j})_h$ equals$  D^{(\bbeta)}_{(\bq^{\mathrm{dyn}}_{k,j})_h}\alpha_{j,l}
  =  \gamma\,\alpha_{j,l}(1-\alpha_{j,l})\,
         \|(\bq^{\mathrm{dyn}}_{k,j})_h \|_2^{2}
  >0.  $
\end{theorem}
\begin{proof}
    See Appendix~\ref{thm:attention-deriv-app}.
\end{proof} Intuitively, when a relational signature is aligned with a query asset’s current informational need, strengthening that signature should raise the model’s attention to the counterpart. Formally, Theorem \ref{thm:attention-deriv} shows that, for fixed $\gamma>0$, the directional derivative of $\alpha_{j,l}$ with respect to $(\bbeta_{i,j,l})_h$ along $(\bq^{\mathrm{dyn}}_{k,j})_h$ is strictly positive, i.e.,
$D^{(\bbeta)}_{(\bq^{\mathrm{dyn}}_{k,j})_h}\alpha_{j,l}=\gamma\,\alpha_{j,l}(1-\alpha_{j,l})\|(\bq^{\mathrm{dyn}}_{k,j})_h\|_2^2>0$.
By contrast, the effect of increasing the gate $\gamma$ itself on $\alpha_{j,l}$ depends on alignment relative to other keys:
$\displaystyle \frac{\partial \alpha_{j,l}}{\partial \gamma}
= \alpha_{j,l}\!\left(b_{j,l}-\sum_{m=1}^d \alpha_{j,m}\,b_{j,m}\right)$,
where $b_{j,m}=\langle(\bq^{\mathrm{dyn}}_{k,j})_h,(\bbeta_{i,j,m})_h\rangle$.
Thus, $\gamma>0$ scales the influence of signature alignment, i.e.\ attention to pairs with above‑average alignment increases as $\gamma$ grows, while attention to below‑average alignment decreases.

Finally, the outputs from all heads are concatenated and passed through a final linear projection $W_O$, followed by a residual connection and layer normalization:
\begin{align}
    \text{Head}_{k,h} &= \softmax( \frac{\bQ_{k,h} \bK_{k,h}^\top}{\sqrt{d_k}} + \gamma \bB_{k,h} ) \bV_{k,h} \\
    \bO_k &= \Concat(\text{Head}_{k,1}, \ldots, \text{Head}_{k,N_H}) W_O \\
    \mathbf{X}''_k &= \LayerNorm(\mathbf{X}'_k + \Dropout(\bO_k))
\end{align}
The resulting collection $\{\mathbf{X}''_k\}_{k=1}^H$ is the output of the Signature-Informed Self-Attention block. 

\paragraph{Training Strategy}
The model is trained end-to-end to optimize portfolio performance under a risk-aware objective. The final output tensor from the Transformer stack, $\mathbf{X}'' \in \R^{H \times d \times d_{\text{model}}}$, summarizes pathwise and cross-asset information over the lookback window. An output head maps this representation at decision time $t_i$ to $K$-step-ahead return predictions: a linear projection (optionally preceded by pooling over the $H$ slices or using the last slice) produces
$\hat{\boldsymbol{\mu}}^{1:K}_{t_i} \in \R^{K \times d}$.
For each forecast step $k \in \{1,\ldots,K\}$, the predicted returns $\hat{\boldsymbol{\mu}}^{(k)}_{t_i} \in \R^d$ are converted into long-only portfolio weights via
$\mathbf{w}^{(k)}_{t_i} = \softmax(\hat{\boldsymbol{\mu}}^{(k)}_{t_i} / \tau)$,
where $\tau > 0$ controls allocation concentration.

Let $\br_{t_{i+k}}$ denote the realized asset-return vector over $[t_{i+k-1}, t_{i+k}]$. The step-$k$ portfolio loss is
$L^{(k)}_{t_{i+k}}(\theta_\omega) = - (\mathbf{w}^{(k)}_{t_i}(\theta))^\top \br_{t_{i+k}}(\omega)$.
The overall objective is formally stated as
\begin{equation}
\min_{\theta} \ \E_{\omega \sim \mathcal{D}} [ \CVaR_\alpha( \{ L^{(k)}(\theta_\omega) \}_{k=1}^{K} ) ],
\label{eq:cvar_loss}
\end{equation}
No auxiliary prediction losses are used. Eq.~\eqref{eq:cvar_loss} is the sole training signal, avoiding the objective-mismatch issues discussed in Section ~\ref{app:related_works}. For each scenario $\omega$, the inner $\CVaR_\alpha$ is taken over the intra-scenario empirical distribution. 
The following derivation shows the dual form and its empirical counterpart used for optimization:
\begin{align}
\mathcal{L}(\theta)
&= \E_{\omega \sim \mathcal{D}} [ \CVaR_\alpha( \{ L^{(k)}({\theta}_{\omega}) \}_{k=1}^{K} ) ] \label{eq:loss_cvar_ideal} \\
&= \E_{\omega \sim \mathcal{D}} [ \min_{\nu_\omega \in \R} ( \nu_\omega + \frac{1}{(1-\alpha)K} \sum_{k=1}^{K} ( L^{(k)}({\theta}_{\omega}) - \nu_\omega )^+ ) ] \label{eq:loss_cvar_dual} \\
&\approx \frac{1}{N} \sum_{i=1}^N \min_{\nu_i \in \R} ( \nu_i + \frac{1}{(1-\alpha)K} \sum_{k=1}^{K} ( L^{(k)}({\theta}_{\omega_i}) - \nu_i )^+ ). \label{eq:loss_cvar_empirical}
\end{align}

\begin{table*}[!htbp]
    
\centering
{\fontsize{7.0}{8.5}\selectfont
\begin{tabular}{ccccc}
\hline
\multicolumn{5}{l}{\textit{Panel A. Asset 40 Universe (S\&P100)}}                                                                                                                                                                               \\ \hline
Models                             & Sharpe Ratio ($\uparrow$)                         & Sortino Ratio ($\uparrow$)                        & Maximum Drawdown ($\downarrow$)      & Final Wealth Factor ($\uparrow$)                  \\ \hline
CVaR                               & 0.1531                                            & 0.2001                                            & 0.3516                               & 1.0569                                            \\
EW                                 & \textbf{\color[HTML]{3531FF} 0.5759}              & \textbf{\color[HTML]{3531FF} 0.7153}              & 0.3688                               & \textbf{\color[HTML]{3531FF} 1.6439}              \\
GMV                                & 0.4148                                            & 0.5337                                            & \textbf{\color[HTML]{FE0000} 0.2743} & 1.3258                                            \\
HRP                                & \textbf{0.4958}                                   & 0.6171                                            & \textbf{\color[HTML]{3531FF} 0.3185} & 1.4561                                            \\
Autoformer                         & 0.2499 $\pm$ 0.1405                               & 0.3423 $\pm$ 0.1980                               & 0.3812 $\pm$ 0.0480                  & 1.1809 $\pm$ 0.2403                               \\
DLinear                            & 0.3167 $\pm$ 0.1326                               & 0.4513 $\pm$ 0.2005                               & 0.3621 $\pm$ 0.0407                  & 1.2915 $\pm$ 0.2133                               \\
FEDformer                          & 0.4006 $\pm$ 0.2317                               & 0.5540 $\pm$ 0.3192                               & 0.3647 $\pm$ 0.0167                  & 1.5198 $\pm$ 0.5703                               \\
iTransformer                       & 0.3157 $\pm$ 0.0749                               & 0.4233 $\pm$ 0.0943                               & 0.4136 $\pm$ 0.0326                  & 1.2860 $\pm$ 0.0147                               \\
NSformer                           & 0.4074 $\pm$ 0.1151                               & 0.5820 $\pm$ 0.1655                               & 0.4475 $\pm$ 0.0672                  & 1.5129 $\pm$ 0.3010                               \\
PatchTST                           & 0.3286 $\pm$ 0.2021                               & 0.4540 $\pm$ 0.2818                               & 0.4523 $\pm$ 0.0838                  & 1.3409 $\pm$ 0.3886                               \\
TimesNet                           & 0.3568 $\pm$ 0.0782                               & 0.4959 $\pm$ 0.1019                               & 0.4704 $\pm$ 0.0701                  & 1.3765 $\pm$ 0.1729                               \\
RFormer                            & 0.4901 $\pm$ 0.1437                               & \textbf{0.6308 $\pm$ 0.1828}                      & \textbf{0.3415 $\pm$ 0.0482}         & \textbf{1.5387 $\pm$ 0.2353}                      \\
\rowcolor[HTML]{DAE8FC} SIT (Ours) & \textbf{\color[HTML]{FE0000} 0.6717 $\pm$ 0.0628} & \textbf{\color[HTML]{FE0000} 0.8232 $\pm$ 0.0792} & 0.3611 $\pm$ 0.0037                  & \textbf{\color[HTML]{FE0000} 1.7903 $\pm$ 0.1023} \\ \hline
\multicolumn{5}{l}{}                                                                                                                                                                                                                  \\ \hline
\multicolumn{5}{l}{\textit{Panel B. Asset 50 Universe (S\&P100)}}                                                                                                                                                                               \\ \hline
Models                             & Sharpe Ratio ($\uparrow$)                         & Sortino Ratio ($\uparrow$)                        & Maximum Drawdown ($\downarrow$)      & Final Wealth Factor ($\uparrow$)                  \\ \hline
CVaR                               & 0.2165                                            & 0.2858                                            & 0.3086                               & 1.1170                                            \\
EW                                 & \textbf{\color[HTML]{3531FF} 0.6008}              & \textbf{\color[HTML]{3531FF} 0.7399}              & 0.3604                               & 1.6683                                            \\
GMV                                & 0.3947                                            & 0.4992                                            & \textbf{\color[HTML]{FE0000} 0.2678} & 1.2845                                            \\
HRP                                & 0.4637                                            & 0.5620                                            & \textbf{\color[HTML]{3531FF} 0.3258} & 1.4021                                            \\
Autoformer                         & 0.3899 $\pm$ 0.1985                               & 0.5321 $\pm$ 0.2870                               & 0.4356 $\pm$ 0.1256                  & 1.4697 $\pm$ 0.4573                               \\
DLinear                            & 0.2540 $\pm$ 0.1215                               & 0.3557 $\pm$ 0.1828                               & 0.3716 $\pm$ 0.0193                  & 1.1883 $\pm$ 0.1979                               \\
FEDformer                          & 0.4318 $\pm$ 0.0692                               & 0.6039 $\pm$ 0.1097                               & 0.4039 $\pm$ 0.1143                  & 1.5286 $\pm$ 0.1508                               \\
iTransformer                       & 0.5162 $\pm$ 0.1367                               & 0.6761 $\pm$ 0.1770                               & 0.4542 $\pm$ 0.0239                  & 1.7910 $\pm$ 0.3722                               \\
NSformer                           & 0.5238 $\pm$ 0.0694                     & \textbf{0.7105 $\pm$ 0.1033}                      & 0.4992 $\pm$ 0.0975                  & \textbf{\color[HTML]{3531FF} 1.8138 $\pm$ 0.1922} \\
PatchTST                           & 0.3821 $\pm$ 0.1871                               & 0.5134 $\pm$ 0.2635                               & 0.4255 $\pm$ 0.1533                  & 1.4411 $\pm$ 0.3814                               \\
TimesNet                           & 0.3050 $\pm$ 0.3439                               & 0.4296 $\pm$ 0.4864                               & 0.5181 $\pm$ 0.1404                  & 1.3737 $\pm$ 0.8857                               \\
RFormer                            & \textbf{0.5315 $\pm$ 0.2519}                               & 0.6671 $\pm$ 0.3255                               & 0.5202 $\pm$ 0.0555         & \textbf{1.8014 $\pm$ 0.6303}                      \\
\rowcolor[HTML]{DAE8FC} SIT (Ours) & \textbf{\color[HTML]{FE0000} 0.7715 $\pm$ 0.0627} & \textbf{\color[HTML]{FE0000} 0.9743 $\pm$ 0.0998} & \textbf{0.3271 $\pm$ 0.0094}         & \textbf{\color[HTML]{FE0000} 1.9215 $\pm$ 0.1792} \\ \hline
\end{tabular}}
    \caption{Portfolio performance of SIT versus baselines across 40- and 50-asset universes. The best, second-best, and third-best results for each metric are highlighted in \textcolor[HTML]{FE0000}{red}, \textcolor[HTML]{3531FF}{blue}, and \textbf{bold}, respectively.}
    \label{main_results}
    \vspace{-16 pt}
\end{table*}

To incorporate risk aversion, we made the choice in Eq. \eqref{eq:loss_cvar_ideal} to minimizing the expected $\CVaR$ of the intra-scenario loss distribution, which is the objective in \eqref{eq:cvar_loss}. Eq. \eqref{eq:loss_cvar_dual} leverages the dual representation of $\CVaR$ \citep{rockafellar2000optimization} under the confidence–level convention: $\CVaR_\alpha(Z)=\min_{\nu\in\R}(\nu+\tfrac{1}{1-\alpha}\E[(Z-\nu)^+])$ with tail mass $1-\alpha$. Thus $\nu_\omega$ equals the $\alpha$–quantile (VaR$_\alpha$) of the intra–scenario loss distribution. Finally, Eq. \eqref{eq:loss_cvar_empirical} presents the empirical objective function used in training, where the expectation $\E_{\omega \sim \mathcal{D}}$ is approximated by an average over a batch of $N$ scenarios $\{\omega_i\}_{i=1}^N$. For each scenario $\omega_i$, the optimal $\hat{\nu}_i$ is the empirical $\alpha$–quantile of its losses $\{L^{(k)}({\theta}_{\omega_i})\}_{k=1}^{K}$.

\section{Experiment}
\subsection{Implementation details}
\paragraph{Dataset} Experiments used three portfolios of 30, 40, and 50 S\&P 100 companies. We also selected two additional portfolios of 10 and 20 assets from the DOW30 to validate performance against a different index composition, which is often characterized as more concentrated. Furthermore, to evaluate robustness across different market dynamics, we included two portfolios consisting of 50 and 100 assets from the CSI 300 index. The daily price data was sourced from Wharton Research Data Services (WRDS). The data was partitioned chronologically into distinct training, validation, and test periods. The training set spans from January 1, 2000, to December 31, 2016. The validation set from January 1, 2017, to December 31, 2019 and the test set from January 1, 2020, to December 27, 2024. This split covers multiple market regimes, including the recent volatility.

\paragraph{Baseline Models} The performance of our proposed model, SIT, is compared against a comprehensive suite of benchmarks spanning traditional and deep learning approaches. Traditional baselines include \textbf{Equally Weighted Portfolio (EWP)} \citep{demiguel2009optimal}, \textbf{Global Minimum Variance (GMV)} \citep{clarke2011minimum, Markowitz1952}, \textbf{Conditional Value-at-Risk (CVaR)} \citep{rockafellar2000optimization} and \textbf{Hierarchical Risk Parity (HRP)} \citep{lopez2016building}. The portfolio optimization strategy forms the second stage of our deep learning-based comparisons, which use predictions from various state-of-the-art time-series forecasting models as input. These forecasters include deep learning models such as \textbf{Autoformer} \citep{wu2021autoformer}, \textbf{DLinear} \citep{zeng2023transformers} \textbf{FEDformer} \citep{zhou2022fedformer}, \textbf{PatchTST} \citep{nie2022time}, \textbf{iTransformer} \citep{liu2023itransformer}, \textbf{Non-stationary Transformers (NSformer)} \citep{liu2022non}, \textbf{TimesNet} \citep{wu2022timesnet} and \textbf{RFormer} \citep{moreno2024rough}. Details of the parameter search space are provided in Appendix ~\ref{sec:Implementation}.

\paragraph{Evaluation Metrics} The strategies were evaluated using four standard financial metrics, assuming a zero risk-free rate. Risk-adjusted performance was measured by the \textbf{Sharpe Ratio}, which accounts for total volatility, and the \textbf{Sortino Ratio}, a refinement that isolates downside risk by considering only downside deviation; higher values are superior for both. Overall growth was tracked by the \textbf{Final Wealth Factor}, while the \textbf{Maximum Drawdown} quantified the largest peak-to-trough percentage decline, with a lower value being preferable.

\subsection{Can SIT deliver superior risk-adjusted performance?} 
We evaluate the out-of-sample asset allocations efficacy of our proposed model: SIT. The comprehensive performance metrics, including risk-adjusted returns and downside risk, are presented for the 40- and 50-asset universes (see Appendix~\ref{sec:appendix_E} for the 30-asset universe experiment). Our analysis underscores that the quality of asset allocation, rather than raw predictive accuracy, is the decisive factor for success, a central tenet of our work. The empirical results, summarized in Table~\ref{main_results}, demonstrate that SIT consistently and significantly outperforms all baseline models across the primary metrics of risk-adjusted return and wealth generation. In the 40-asset universe, for instance, SIT achieves a Sharpe Ratio of 0.6717 and a Sortino Ratio of 0.8232, decisively surpassing the next-best traditional baseline EWP and all deep learning counterparts. This translates into superior capital growth, with SIT yielding a Final Wealth Factor of 1.7903, the highest among all tested strategies.

The primary contribution of SIT becomes evident when contrasted with the predict-then-optimize models. These models, which rely on minimizing statistical forecasting error, exhibit poor and highly unstable portfolio performance. Many fail to outperform even simple heuristics. Their high standard deviations across runs underscore the problem of error amplification, where small prediction inaccuracies are magnified by the downstream optimizer into fragile, impractical allocations. This finding empirically validates our core hypothesis. Optimizing for prediction is not a valid proxy for optimizing for allocation quality. In addition to its inductive biases designed for financial assets, SIT's decision-focused approach directly minimizes the portfolio's CVaR, fundamentally aligning the model's objective with the financial goal and thereby avoiding this critical pitfall. Furthermore, SIT demonstrates a superior risk-return profile compared to traditional quantitative strategies. While risk-minimizing models like Global Minimum Variance (GMV) achieve low Maximum Drawdown (MDD) (e.g., 0.2743 in the 40-asset case), they do so at the cost of substantially lower returns (Sharpe Ratio of 0.4148). SIT, conversely, maintains a competitive MDD (0.3611) while delivering significantly higher returns. For the results on the DOW30 and SCI300, please refer to Appendix~\ref{sec:appendix_E}.

\subsection{Module‑Level Contribution Experiments}
To dissect the contribution of each architectural pillar of the \textbf{SIT}, we conduct a comprehensive ablation study. For this analysis, each ablated variant is created by independently removing a single key component from the full SIT model, while all other hyperparameters are held constant. This module-drop protocol allows for a precise evaluation of each component's marginal contribution. The variants evaluated are: (i) \textbf{w/o CVaR}, which replaces the Conditional Value-at-Risk objective with a risk-neutral objective of maximizing mean returns (ii) \textbf{w/o Asset Attn}, which disables the entire Signature-Informed Self-Attention mechanism across assets (iii) \textbf{w/o Financial Bias}, which removes the signature-derived bias term from the attention scores, reverting to a standard self-attention mechanism and (iv) \textbf{w/o Gate $\gamma$}, which removes the learnable gate $\gamma$ that scales the financial bias.

\begin{table}[ht!]
    \centering
{\fontsize{7.8}{8.8}\selectfont
\begin{tabular}{ccccc}
\hline
\multicolumn{5}{l}{\textit{Panel A. Asset 40 Universe}}                                                                                            \\ \hline
Models             & Sharpe                        & Sortino                       & MDD                           & Wealth                        \\ \hline
\rowcolor[HTML]{DAE8FC} 
SIT (Ours)         & {\color[HTML]{000000} 0.6717} & {\color[HTML]{000000} 0.8232} & {\color[HTML]{000000} 0.3611} & {\color[HTML]{000000} 1.7903} \\
w/o CVaR           & 0.5691$\small^{c}$                        & 0.7057$\small^{b}$                        & 0.3695                        & 1.6409$\small^{b}$                         \\
w/o Asset Attn     & 0.5284$\small^{c}$                        & 0.6576$\small^{c}$                        & 0.3342$\small^{b}$                         & 1.5381$\small^{c}$                         \\
w/o Financial Bias & 0.6045$\small^{b}$                        & 0.7590$\small^{b}$                        & 0.3431$\small^{c}$                         & 1.6801$\small^{c}$                         \\
w/o Gate $\gamma$          & 0.5251                        & 0.6489$\small^{c}$                        & 0.3470$\small^{b}$                         & 1.5470$\small^{c}$                         \\ \hline
\multicolumn{5}{c}{}                                                                                                                               \\ \hline
\multicolumn{5}{l}{\textit{Panel B. Asset 50 Universe}}                                                                                            \\ \hline
Models             & Sharpe                         & Sortino                       & MDD                           & Wealth                        \\ \hline
\rowcolor[HTML]{DAE8FC} 
SIT (Ours)         & {\color[HTML]{000000} 0.7715} & {\color[HTML]{000000} 0.9743} & {\color[HTML]{000000} 0.3271} & {\color[HTML]{000000} 1.9215} \\
w/o CVaR           & 0.5923$\small^{c}$                         & 0.7294$\small^{c}$                         & 0.3606$\small^{b}$                         & 1.6622$\small^{c}$                         \\
w/o Asset Attn     & 0.6268$\small^{c}$                         & 0.7650$\small^{b}$                         & 0.3298                        & 1.6562$\small^{b}$                         \\
w/o Financial Bias & 0.6047{\scriptsize$^{c}$}                         & 0.7545{\scriptsize$^{b}$}                         & 0.3224                        & 1.6260{\scriptsize$^{b}$}                         \\
w/o Gate $\gamma$          & 0.5831$\small^{c}$                         & 0.7138$\small^{c}$                         & 0.3305                        & 1.5945$\small^{c}$                         \\ \hline
\end{tabular}}

    \caption{Ablation study of SIT's core components.Superscripts $b$ and $c$ indicate statistical significance at $p < 0.05$ and $p < 0.001$ in paired tests against SIT (Ours).}
    \label{main_abl}
    \vspace{-20 pt}
\end{table}
The results, summarized in Table~\ref{main_abl}, underscore the importance of each design choice. The most critical element is the decision-focused approach. When the Conditional Value-at-Risk ($\CVaR$) loss is replaced with a standard risk-neutral objective (w/o CVaR), the Sharpe Ratio on the 40-asset universe falls from 0.6717 to 0.5691. This demonstrates that direct optimization for risk-adjusted outcomes is essential for producing stable allocations that are resilient to tail events. The components of the Signature-Informed Self-Attention mechanism prove equally vital. Removing the asset-wise attention layer entirely (w/o Asset Attn) severely reduces the model's ability to reason about portfolio structure, causing a steep performance drop (Sharpe of 0.5284). Furthermore, removing just the signature-based inductive bias (w/o Financial Bias), i.e.\ reverting to a standard attention mechanism, still leads to significant degradation (Sharpe of 0.6045). This confirms that injecting principled geometric knowledge of lead-lag structures (Proposition~\ref{thm:lead-lag-main}) is more effective than forcing the model to learn these relationships from scratch. Finally, removing the learnable gate $\gamma$ (w/o Gate $\gamma$) is highly detrimental (Sharpe of 0.5251), highlighting that the model must learn to dynamically modulate the influence of these financial priors (Theorem~\ref{thm:attention-deriv}) to adapt to changing market regimes.

\subsection{Are signatures effective at driving attention?}
SIT perturbs asset‑axis attention logits by an additive, signature‑induced bias, $\text{Logits}_{k,h} = \frac{\bQ_{k,h} \bK_{k,h}^\top}{\sqrt{d_k}} + \gamma \bB_{k,h} \in \R^{d \times d}$ where $\bB_{k,h}$ aggregates the alignment equation (\ref{dynamic_score}) between the query’s informational need and the cross‑signature embedding. Theorem~\ref{thm:attention-deriv} predicts that increasing this alignment in the query direction strictly raises attention weight on the corresponding key when $\gamma>0$. Coupled with Proposition~\ref{thm:lead-lag-main}, which links persistent lead–lag to non zero second-order signatures, the method suggests a testable implication. Consequently, assets whose signatures are stronger should systematically attract more inbound attention. We formalize this implication by defining, at each decision time $t$, a per‑asset signature‑strength score $s_{t,j} = \frac{1}{HN_Hd}\sum_{k=1}^{H}\sum_{h=1}^{N_H}\sum_{m=1}^{d} (\bB_{k,h})_{m,j}$ and the corresponding inbound‑attention share $a_{t,j} = \frac{1}{HN_Hd} \sum_{k=1}^{H}\sum_{h=1}^{N_H}\sum_{m=1}^{d} \alpha_{k,h,m\to j}$. We then compute the Pearson correlation between $ \{s_{t,j}\}_{j=1}^{d}$ and ($ \{a_{t,j}\}_{j=1}^{d} $) at each $t$ and examine the distribution of these correlations across the test horizon.

\begin{figure}[ht]
\small{
  \centering
  \includegraphics[width=\linewidth]{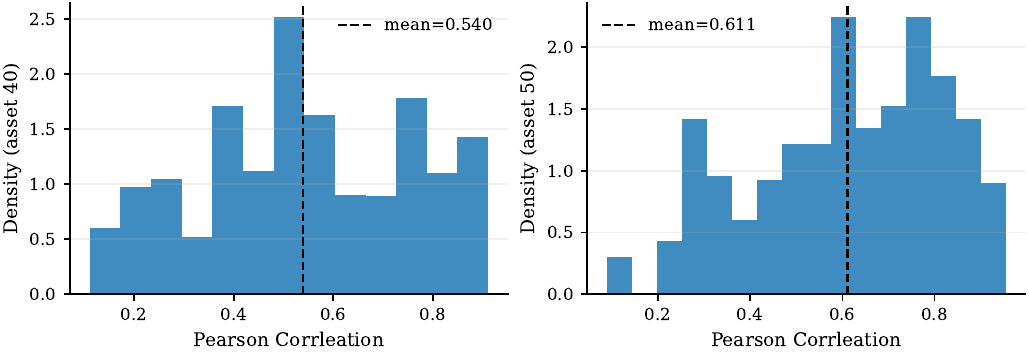}
  \caption{Distribution of correlations between signature strength and attention weights.}
  \label{corr_fig}
  \vspace{-9 pt}}    
\end{figure}

In Figure~\ref{corr_fig}, the distribution is right‑skewed with means of 0.540 for the 40‑asset universe and 0.611 for the 50‑asset universe, indicating that stronger signature signals are associated with higher inbound attention mass. The heavy right tail shows frequent periods in which attention concentrates on assets whose signature‑derived relations are most salient, while the paucity of negative correlations rules out a degeneracy in which the bias is ignored by the attention mechanism. These empirical patterns directly instantiate the monotonicity predicted by Theorem~\ref{thm:attention-deriv}. So, as alignment $b_{k,h,j,l}$ strengthens, the induced change in $\alpha_{j,l}$ is positive, and the learned $\gamma$ scales this effect without flipping its sign. 

The financial significance of this correlation is twofold. First, $\bB$ is not a static embedding. This couples dynamic queries $\bq^{\text{dyn}}_{k,j}$ with cross‑signatures $\bbeta_{i,j,l}$. Hence, the correlation becomes most apparent when the model routes information toward assets whose relational signatures are currently informative. Second, because SIT is trained solely through the portfolio‑level $\CVaR_\alpha$ loss, the learned attention must improve tail‑aware allocations rather than just forecast error. The observed right‑skew therefore indicates that signatures are not merely present but actively utilized to amplify risk‑relevant dependencies. In Appendix ~\label{sec:appendix_G}, Figure ~\ref{gamma-figure} overlays the learned gate $\gamma$ on portfolio drawdowns. We observe that higher values of $\gamma$ tend to cluster during volatile episodes. This suggests that SIT increases the weight of signature‑based priors precisely when cross‑asset relations are most informative and prediction noise is elevated. We therefore conjecture that this financial bias offers a plausible explanation for the results in Table~\ref{main_results}. Specifically, it allows SIT to achieve the high risk-adjusted returns while maintaining a robust diversification profile comparable to the EWP.

\subsection{Why PFL approach fail financial objectives?}
Across all eight forecasting models, prediction alone does not translate into superior trading performance. As illustrated in Figure~\ref{figure_sharpe_bars}, which reports out‑of‑sample Sharpe ratios after CVaR optimization, the decision‑focused learning (\textcolor{blue}{blue}) consistently dominates the prediction‑only approach (\textcolor{gray}{gray}) across both the 40‑ and 50‑asset universes. Moreover, as observed in Figure~\ref{figure_1}, the gap widens in the 50‑asset universe, indicating that higher dimensionality amplifies the failure of the predict‑then‑optimize pipeline.

\begin{figure}[ht]
\small{
  \centering
  \includegraphics[width=1\linewidth]{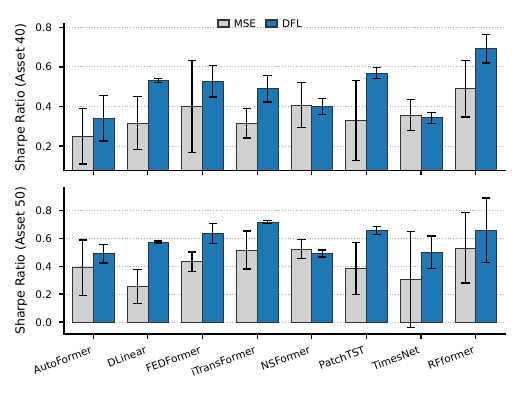}
  \caption{Out‑of‑sample Sharpe ratios after CVaR portfolio optimization on 40‑ and 50‑asset S\&P100 universes.}
  \label{figure_sharpe_bars}
  \vspace{-9 pt}}    
\end{figure}

\begin{figure*}[ht]
\small{
\centering
\includegraphics[width=\linewidth]{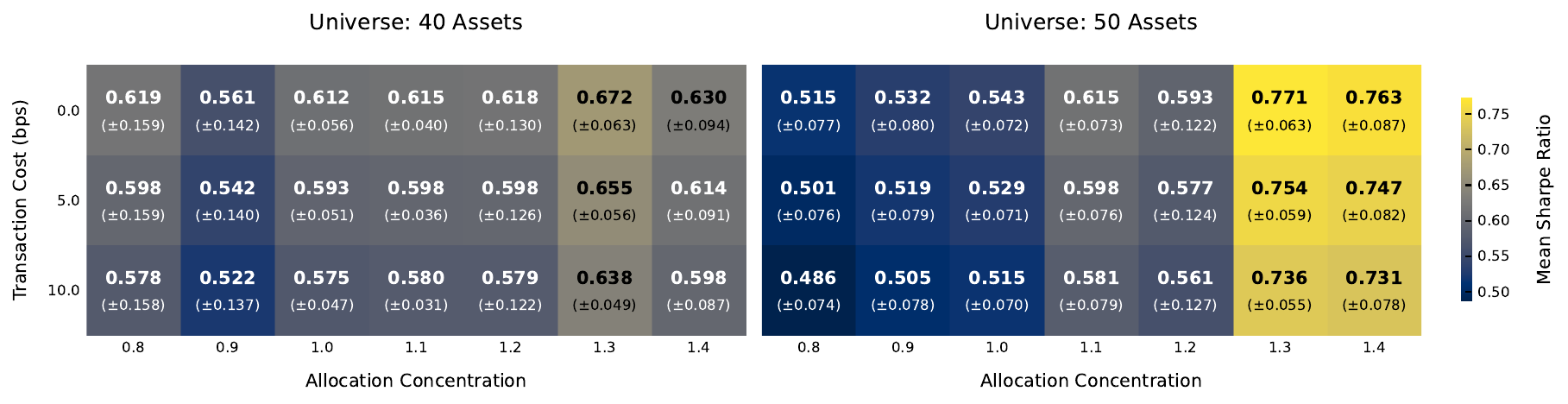}
\caption{Sharpe ratio sensitivity to transaction costs and allocation concentration ($\tau$). Left: 40 assets Right: 50 assets.}
\label{plot_sensi}
\vspace{-16pt}}
\end{figure*}

The failure of MSE-based approaches stems from the objective mismatch. See Appendix~\ref{sec:appendix_pto} for details on the two-stage implementation. Prediction‑focused training optimizes a surrogate that is misaligned with the financial goal. Therefore, prediction losses weight all errors equally and are blind to the downstream mapping from forecasts to actions. A model that is optimal for $L_{\text{pred}}$ need not be even approximately optimal for $\CVaR$. Consequently, tiny cross‑sectional ranking errors induced by MSE training can be amplified by the optimizer, effectively flipping the identity of the largest weights. The evidence in Figure~\ref{figure_sharpe_bars} and the mechanisms above explain why predict‑then‑optimize pipelines produce low and unstable Sharpe despite competitive $L_{\text{pred}}$. By differentiating through the portfolio layer and optimizing the risk metric of interest, decision‑focused learning reshapes the logits so that only forecast features that improve allocation under constraints and tails are amplified. This alignment both raises risk‑adjusted returns and tightens variability across runs, which we observe consistently across backbones and universes. We believe that aligning the training objective with the financial objective is necessary for turning predictive signals into reliable portfolios.

\subsection{Sensitivity Analysis}
We examine how proportional trading frictions and allocation concentration affect SIT. The concentration parameter $\tau$ is the softmax temperature in the allocation layer, $\bw^{(k)}_{t}=\softmax(\hat{\bmu}^{(k)}_{t}/\tau)$. Smaller $\tau$ concentrates capital, larger $\tau$ spreads it. We sweep $\tau \in \{0.8,...,1.4\}$. Transaction costs are one‑way proportional fees of $c\in{0,5,10}$ basis points per dollar traded. All other settings follow the main evaluation: long‑only, fully invested, monthly ($k$-step) rebalancing on the 40‑ and 50‑asset universes, and a zero risk‑free rate for all risk‑adjusted metrics. Figure~\ref{plot_sensi} reports mean ($\pm$ std) Sharpe ratios for every $(\tau,c)$ pair, with the 40‑asset universe on the left and 50‑asset on the right. Two patterns are stable across universes. First, performance peaks at moderate dispersion, near $\tau\approx1.3$. Second, frictions compress Sharpe roughly linearly over this range: moving from $0$ to $10$ bps reduces Sharpe by about $0.03$–$0.04$ at the optimum.

Trading costs predictably erode realized performance, yet the impact is mitigated when allocations avoid both extreme concentration (small $\tau$) and excessive diffusion (very large $\tau$). The interior optimum near $\tau\approx1.3$ indicates that SIT’s gains arise from robust allocation—balancing diversification with conviction rather than from raw prediction accuracy alone. The cost penalty is slightly smaller at the optimum in the 40‑asset case (drop $0.034$) than for concentrated settings such as $\tau\in \{0.8,0.9\}$ (drops $0.039$–$0.041$), whereas in the 50‑asset universe the smallest penalty occurs at more concentrated $\tau$ (e.g., $\tau=0.9$ drops $0.027$). This non‑linearity suggests that the turnover induced by spreading capital interacts with cross‑sectional breadth. With fewer assets, moderate diversification can temper trading; with more assets, broader participation slightly increases cost sensitivity.

\begin{figure}[ht!]
\small{
  \centering
  \includegraphics[width=1\linewidth]{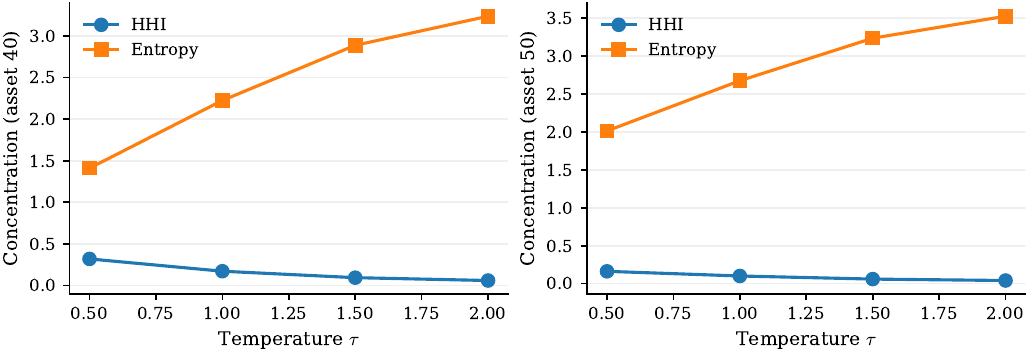}
  \caption{Impact of temperature $\tau$ on portfolio diversification. }
  \label{figure_tau}
  \vspace{-9 pt}}    
\end{figure}

Also, as shown in Figure~\ref{figure_tau}, increasing the softmax temperature $\tau$ monotonically reduces cross‑sectional concentration and increases diversification. \textcolor{blue}{Herfindahl–Hirschman Index} $\mathrm{HHI}(\bw)=\sum_j w_j^2$ declines, while \textcolor{orange}{Shannon entropy} $\mathcal{H}(\bw)=-\sum_j w_j\log w_j$ rises. For any fixed $\tau$, the 50‑asset universe achieves lower $\mathrm{HHI}$ and higher $\mathcal{H}$ than the 40‑asset universe, indicating a more diffuse allocation when the investable set is broader. These diagnostics provide an interpretable, one‑to‑one control of concentration through $\tau$, useful when desk policies cap the effective number of active lines or impose minimum diversification.
\vspace{-8 pt}
\section{Conclusion}
This work argues that effective quantitative portfolio management requires robust allocation policies, not just optimizing prediction accuracy. We introduce the Signature-informed Transformer, a novel framework using path signatures for rich feature representation, a signature-augmented attention mechanism for financial biases like lead-lag effects, and a training objective that directly minimizes portfolio Conditional Value-at-Risk. Our empirical results show that SIT decisively outperforms baselines, which often are harmed by objective mismatch and error amplification. SIT's performance remains superior under realistic transaction costs, underscoring the importance of its calibrated, signature-based architecture. While tested on U.S. equity data, this framework could be extended to higher-frequency, global, multi-asset markets. Ultimately, SIT provides a blueprint for ML systems to progress from forecasting towards a more end-to-end, risk-aware capital allocation.

\clearpage

\bibliography{example_paper}

@article{chevyrev2016primer,
  title={A primer on the signature method in machine learning},
  author={Chevyrev, Ilya and Kormilitzin, Andrey},
  journal={arXiv preprint arXiv:1603.03788},
  year={2016}
}

@article{lyons1998differential,
  title={Differential equations driven by rough signals},
  author={Lyons, Terry J},
  journal={Revista Matem{\'a}tica Iberoamericana},
  volume={14},
  number={2},
  pages={215--310},
  year={1998}
}

@book{lyons2007differential,
  title={Differential equations driven by rough paths: Ecole d'Et{\'e} de Probabilit{\'e}s de Saint-Flour XXXIV-2004},
  author={Lyons, Terry J and Caruana, Michael and L{\'e}vy, Thierry},
  year={2007},
  publisher={Springer}
}

@article{lopez2016building,
  title={Building diversified portfolios that outperform out-of-sample},
  author={Lopez de Prado, Marcos},
  journal={Journal of Portfolio Management},
  year={2016}
}

@article{rockafellar2000optimization,
  title={Optimization of conditional value-at-risk},
  author={Rockafellar, R Tyrrell and Uryasev, Stanislav and others},
  journal={Journal of risk},
  volume={2},
  pages={21--42},
  year={2000}
}

@article{hwang2025deep,
  title={Deep Learning in Asset Management: Architectures, Applications, and Challenges},
  author={Hwang, Yoontae and Lee, Youngbin and Lee, Junhyeong and Zohren, Stefan and Kim, Jang Ho and Kim, Woo Chang and Lee, Yongjae and Fabozzi, Frank J},
  journal={Applications, and Challenges (September 15, 2025)},
  year={2025}
}

@article{clarke2011minimum,
  title={Minimum-variance portfolio composition},
  author={Clarke, Roger and De Silva, Harindra and Thorley, Steven},
  journal={Journal of Portfolio Management},
  volume={37},
  number={2},
  pages={31},
  year={2011}
}

@article{demiguel2009optimal,
  title={Optimal versus naive diversification: How inefficient is the 1/N portfolio strategy?},
  author={DeMiguel, Victor and Garlappi, Lorenzo and Uppal, Raman},
  journal={The review of Financial studies},
  volume={22},
  number={5},
  pages={1915--1953},
  year={2009},
  publisher={Oxford University Press}
}

@article{liu2022non,
  title={Non-stationary transformers: Exploring the stationarity in time series forecasting},
  author={Liu, Yong and Wu, Haixu and Wang, Jianmin and Long, Mingsheng},
  journal={Advances in neural information processing systems},
  volume={35},
  pages={9881--9893},
  year={2022}
}

@inproceedings{zeng2023transformers,
  title={Are transformers effective for time series forecasting?},
  author={Zeng, Ailing and Chen, Muxi and Zhang, Lei and Xu, Qiang},
  booktitle={Proceedings of the AAAI conference on artificial intelligence},
  volume={37},
  number={9},
  pages={11121--11128},
  year={2023}
}

@article{liu2023itransformer,
  title={itransformer: Inverted transformers are effective for time series forecasting},
  author={Liu, Yong and Hu, Tengge and Zhang, Haoran and Wu, Haixu and Wang, Shiyu and Ma, Lintao and Long, Mingsheng},
  journal={arXiv preprint arXiv:2310.06625},
  year={2023}
}

@article{Markowitz1952,
  added-at = {2018-10-18T14:55:01.000+0200},
  author = {Markowitz, Harry},
  biburl = {https://www.bibsonomy.org/bibtex/20db8a52acfc22b526179fd4a81ebaaff/chsk},
  doi = {10.2307/2975974},
  interhash = {0262820abfb740618d3fb500e5213b60},
  intrahash = {0db8a52acfc22b526179fd4a81ebaaff},
  journal = {The Journal of Finance},
  keywords = {Finanzwirtschaft Kapitalmarkttheorie},
  language = {English},
  month = mar,
  number = 1,
  pages = {77-91},
  timestamp = {2018-10-19T10:36:40.000+0200},
  title = {Portfolio Selection
},
  url = {https://www.jstor.org/stable/2975974},
  volume = 7,
  year = 1952
}

@article{hwang2025decision,
  title={Decision-informed Neural Networks with Large Language Model Integration for Portfolio Optimization},
  author={Hwang, Yoontae and Kong, Yaxuan and Zohren, Stefan and Lee, Yongjae},
  journal={arXiv preprint arXiv:2502.00828},
  year={2025}
}

@article{lee2024overview,
  title={An Overview of Machine Learning for Portfolio Optimization.},
  author={Lee, Yongjae and Kim, Jang Ho and Kim, Woo Chang and Fabozzi, Frank J},
  journal={Journal of Portfolio Management},
  volume={51},
  number={2},
  year={2024}
}

@article{fischer2018deep,
  title={Deep learning with long short-term memory networks for financial market predictions},
  author={Fischer, Thomas and Krauss, Christopher},
  journal={European journal of operational research},
  volume={270},
  number={2},
  pages={654--669},
  year={2018},
  publisher={Elsevier}
}

@article{vaswani2017attention,
  title={Attention is all you need},
  author={Vaswani, Ashish and Shazeer, Noam and Parmar, Niki and Uszkoreit, Jakob and Jones, Llion and Gomez, Aidan N and Kaiser, {\L}ukasz and Polosukhin, Illia},
  journal={Advances in neural information processing systems},
  volume={30},
  year={2017}
}

@article{woo2022etsformer,
  title={Etsformer: Exponential smoothing transformers for time-series forecasting},
  author={Woo, Gerald and Liu, Chenghao and Sahoo, Doyen and Kumar, Akshat and Hoi, Steven},
  journal={arXiv preprint arXiv:2202.01381},
  year={2022}
}

@inproceedings{zhou2022fedformer,
  title={Fedformer: Frequency enhanced decomposed transformer for long-term series forecasting},
  author={Zhou, Tian and Ma, Ziqing and Wen, Qingsong and Wang, Xue and Sun, Liang and Jin, Rong},
  booktitle={International conference on machine learning},
  pages={27268--27286},
  year={2022},
  organization={PMLR}
}

@article{wu2021autoformer,
  title={Autoformer: Decomposition transformers with auto-correlation for long-term series forecasting},
  author={Wu, Haixu and Xu, Jiehui and Wang, Jianmin and Long, Mingsheng},
  journal={Advances in neural information processing systems},
  volume={34},
  pages={22419--22430},
  year={2021}
}

@article{wu2022timesnet,
  title={Timesnet: Temporal 2d-variation modeling for general time series analysis},
  author={Wu, Haixu and Hu, Tengge and Liu, Yong and Zhou, Hang and Wang, Jianmin and Long, Mingsheng},
  journal={arXiv preprint arXiv:2210.02186},
  year={2022}
}

@article{nie2022time,
  title={A time series is worth 64 words: Long-term forecasting with transformers},
  author={Nie, Yuqi and Nguyen, Nam H and Sinthong, Phanwadee and Kalagnanam, Jayant},
  journal={arXiv preprint arXiv:2211.14730},
  year={2022}
}

@inproceedings{zhou2021informer,
  title={Informer: Beyond efficient transformer for long sequence time-series forecasting},
  author={Zhou, Haoyi and Zhang, Shanghang and Peng, Jieqi and Zhang, Shuai and Li, Jianxin and Xiong, Hui and Zhang, Wancai},
  booktitle={Proceedings of the AAAI conference on artificial intelligence},
  volume={35},
  number={12},
  pages={11106--11115},
  year={2021}
}

@article{li2019enhancing,
  title={Enhancing the locality and breaking the memory bottleneck of transformer on time series forecasting},
  author={Li, Shiyang and Jin, Xiaoyong and Xuan, Yao and Zhou, Xiyou and Chen, Wenhu and Wang, Yu-Xiang and Yan, Xifeng},
  journal={Advances in neural information processing systems},
  volume={32},
  year={2019}
}

@article{lee2024anatomy,
  title={Anatomy of Machines for Markowitz: Decision-Focused Learning for Mean-Variance Portfolio Optimization},
  author={Lee, Junhyeong and Tae, Inwoo and Lee, Yongjae},
  journal={arXiv preprint arXiv:2409.09684},
  year={2024}
}

@inproceedings{yoo2021accurate,
  title={Accurate multivariate stock movement prediction via data-axis transformer with multi-level contexts},
  author={Yoo, Jaemin and Soun, Yejun and Park, Yong-chan and Kang, U},
  booktitle={Proceedings of the 27th ACM SIGKDD Conference on Knowledge Discovery \& Data Mining},
  pages={2037--2045},
  year={2021}
}

@article{miori2022returns,
  title={Returns-Driven Macro Regimes and Characteristic Lead-Lag Behaviour between Asset Classes},
  author={Miori, Deborah and Cucuringu, Mihai},
  journal={arXiv preprint arXiv:2209.00268},
  year={2022}
}

@article{cartea2023detecting,
  title={Detecting lead-lag relationships in stock returns and portfolio strategies},
  author={Cartea, {\'A}lvaro and Cucuringu, Mihai and Jin, Qi},
  journal={Available at SSRN 4599565},
  year={2023}
}

@article{fama1970efficient,
  title={Efficient capital markets},
  author={Fama, Eugene F},
  journal={Journal of finance},
  volume={25},
  number={2},
  pages={383--417},
  year={1970}
}

@article{cont2001empirical,
  title={Empirical properties of asset returns: stylized facts and statistical issues},
  author={Cont, Rama},
  journal={Quantitative finance},
  volume={1},
  number={2},
  pages={223},
  year={2001},
  publisher={IOP Publishing}
}

@article{buehler2019deep,
  title={Deep hedging},
  author={Buehler, Hans and Gonon, Lukas and Teichmann, Josef and Wood, Ben},
  journal={Quantitative Finance},
  volume={19},
  number={8},
  pages={1271--1291},
  year={2019},
  publisher={Taylor \& Francis}
}

@article{moody2001learning,
  title={Learning to trade via direct reinforcement},
  author={Moody, John and Saffell, Matthew},
  journal={IEEE transactions on neural Networks},
  volume={12},
  number={4},
  pages={875--889},
  year={2001},
  publisher={IEEE}
}

@article{lyons2022signature,
  title={Signature methods in machine learning},
  author={Lyons, Terry and McLeod, Andrew D},
  journal={arXiv preprint arXiv:2206.14674},
  year={2022}
}

@inproceedings{tong2023sigformer,
  title={Sigformer: Signature transformers for deep hedging},
  author={Tong, Anh and Nguyen-Tang, Thanh and Lee, Dongeun and Tran, Toan M and Choi, Jaesik},
  booktitle={Proceedings of the Fourth ACM International Conference on AI in Finance},
  pages={124--132},
  year={2023}
}

@article{gyurko2013extracting,
  title={Extracting information from the signature of a financial data stream},
  author={Gyurk{\'o}, Lajos Gergely and Lyons, Terry and Kontkowski, Mark and Field, Jonathan},
  journal={arXiv preprint arXiv:1307.7244},
  year={2013}
}

@article{bonnier2019deep,
  title={Deep signature transforms},
  author={Bonnier, Patric and Kidger, Patrick and Arribas, Imanol Perez and Salvi, Cristopher and Lyons, Terry},
  journal={arXiv preprint arXiv:1905.08494},
  year={2019}
}

@article{xu2021hist,
  title={Hist: A graph-based framework for stock trend forecasting via mining concept-oriented shared information},
  author={Xu, Wentao and Liu, Weiqing and Wang, Lewen and Xia, Yingce and Bian, Jiang and Yin, Jian and Liu, Tie-Yan},
  journal={arXiv preprint arXiv:2110.13716},
  year={2021}
}

@inproceedings{duan2025factorgcl,
  title={FactorGCL: A Hypergraph-Based Factor Model with Temporal Residual Contrastive Learning for Stock Returns Prediction},
  author={Duan, Yitong and Wang, Weiran and Li, Jian},
  booktitle={Proceedings of the AAAI Conference on Artificial Intelligence},
  volume={39},
  number={1},
  pages={173--181},
  year={2025}
}

@article{lim2021temporal,
  title={Temporal fusion transformers for interpretable multi-horizon time series forecasting},
  author={Lim, Bryan and Ar{\i}k, Sercan {\"O} and Loeff, Nicolas and Pfister, Tomas},
  journal={International journal of forecasting},
  volume={37},
  number={4},
  pages={1748--1764},
  year={2021},
  publisher={Elsevier}
}

@article{kidger2020neural,
  title={Neural controlled differential equations for irregular time series},
  author={Kidger, Patrick and Morrill, James and Foster, James and Lyons, Terry},
  journal={Advances in neural information processing systems},
  volume={33},
  pages={6696--6707},
  year={2020}
}

@inproceedings{arribas2020sig,
  title={Sig-SDEs model for quantitative finance},
  author={Arribas, Imanol Perez and Salvi, Cristopher and Szpruch, Lukasz},
  booktitle={Proceedings of the First ACM International Conference on AI in Finance},
  pages={1--8},
  year={2020}
}

@article{moreno2024rough,
  title={Rough transformers: Lightweight and continuous time series modelling through signature patching},
  author={Moreno-Pino, Fernando and Arroyo, {\'A}lvaro and Waldon, Harrison and Dong, Xiaowen and Cartea, {\'A}lvaro},
  journal={Advances in Neural Information Processing Systems},
  volume={37},
  pages={106264--106294},
  year={2024}
}

@article{salvi2021signature,
  title={The signature kernel is the solution of a goursat pde},
  author={Salvi, Cristopher and Cass, Thomas and Foster, James and Lyons, Terry and Yang, Weixin},
  journal={SIAM Journal on Mathematics of Data Science},
  volume={3},
  number={3},
  pages={873--899},
  year={2021},
  publisher={SIAM}
}

@article{kiraly2019kernels,
  title={Kernels for sequentially ordered data},
  author={Kir{\'a}ly, Franz J and Oberhauser, Harald},
  journal={Journal of Machine Learning Research},
  volume={20},
  number={31},
  pages={1--45},
  year={2019}
}

@article{costa2023distributionally,
  title={Distributionally robust end-to-end portfolio construction},
  author={Costa, Giorgio and Iyengar, Garud N},
  journal={Quantitative Finance},
  volume={23},
  number={10},
  pages={1465--1482},
  year={2023},
  publisher={Taylor \& Francis}
}

@article{zhang2020deep,
  title={Deep learning for portfolio optimization},
  author={Zhang, Zihao and Zohren, Stefan and Roberts, Stephen},
  journal={arXiv preprint arXiv:2005.13665},
  year={2020}
}

@article{chung2022effects,
  title={The effects of errors in means, variances, and correlations on the mean-variance framework},
  author={Chung, Munki and Lee, Yongjae and Kim, Jang Ho and Kim, Woo Chang and Fabozzi, Frank J},
  journal={Quantitative Finance},
  volume={22},
  number={10},
  pages={1893--1903},
  year={2022},
  publisher={Taylor \& Francis}
}
\bibliographystyle{icml2026}

\newpage
\appendix
\onecolumn

\section{Appendix. Related works} \label{app:related_works}
\paragraph{Deep Learning in Asset Allocation} The application of deep learning in quantitative trading has largely bifurcated into two distinct paradigms. The first, the classic Predict Focused Learning (PFL) pipeline, focuses on developing return-prediction models. In this stream of research, complex architectures map market data to future price movements. For instance, Transformers have been adapted to capture temporal dependencies in asset prices for return forecasting \citep{ fischer2018deep, yoo2021accurate, lim2021temporal}. Some models employ Graph Neural Networks (GNNs) to explicitly model inter-asset relationships, such as sector correlations, to improve prediction accuracy \citep{xu2021hist, duan2025factorgcl}. Despite their architectural novelty, these methods inherit the fundamental flaws of a decoupled approach \citep{lee2024overview}. They suffer from objective mismatch, as optimizing for prediction error (e.g., Mean Squared Error) does not guarantee profitable portfolio construction, and are susceptible to error amplification, where small prediction inaccuracies lead to drastically suboptimal and unstable allocations \citep{chung2022effects}. A more promising direction, which we term Decision Focused Learning (DFL), seeks to overcome these limitations by training policies end-to-end. These models learn a direct mapping from market state to portfolio allocations, optimizing a true financial objective like a risk-adjusted return metric. Foundational work demonstrated how to embed financial operators, such as portfolio value and Sharpe ratio, within a deep network, making the entire strategy differentiable and trainable via gradient descent \citep{buehler2019deep, zhang2020deep, costa2023distributionally}. Recent research has increasingly emphasized embedding practical portfolio constraints into the model training phase. Typical examples include prohibiting short selling, ensuring full investment (i.e., portfolio weights sum to one), and placing upper or lower bounds on individual asset allocations, all of which are incorporated directly into the model architecture or loss function \citep{lee2024anatomy, hwang2025decision}.  While these end-to-end frameworks efficiently align the model's training objective with financial goals, they often fall short in explicitly guiding the model to learn and utilize the diverse information present in multi-asset settings. This leaves a critical research gap. These models lack a strong financial inductive bias to explicitly represent the non-linear, path-dependent nature of price series and the geometric, time-local lead-lag relationships between assets. In our implementation, the predicted returns $\hat{\mu}$ serve only as internal logits for a differentiable allocation layer. All parameters are trained end-to-end solely through the portfolio-level CVaR objective, not a pointwise prediction loss, aligning with decision-focused learning. Our work addresses this gap by integrating the mathematical theory of path signatures directly into a transformer's attention mechanism, creating an optimization-aware model that is architecturally designed to understand the underlying geometry of market dynamics. See \citep{lee2024overview, hwang2025deep} for more detailed review of asset allocations

\paragraph{Transformer-Based Time Series Forecasting} The success of the Transformer architecture in natural language processing has inspired its widespread adoption for time series forecasting. The core innovation, the self-attention mechanism, allows these models to dynamically weigh the importance of all past time steps when predicting future values, enabling them to capture complex, long-range dependencies without the sequential processing limitations of recurrent neural networks\citep{vaswani2017attention, li2019enhancing}. To extend the receptive field without incurring the quadratic cost of full attention, a stream of variants introduce sparsity or hierarchical structure. For example, LogSparse\citep{li2019enhancing}, ProbSparse\citep{zhou2021informer} and related kernels discard low-magnitude query–key interactions to achieve $\mathcal{O}(L\log L)$ complexity while retaining global context.  From a more fundamental time series data perspective, Autoformer\citep{wu2021autoformer}, FEDformer\citep{zhou2022fedformer} and ETSformer\citep{woo2022etsformer} decompose signals into trend–seasonality (or frequency-domain) components so that long-horizon patterns can be modeled additively and multiplicatively with reduced error accumulation. More recent PatchTST\citep{nie2022time} and TimesNet\citep{wu2022timesnet} patch neighboring observations or convolve multi-scale windows before attention, embedding stronger inductive biases for periodicity and aliasing control. While these innovations alleviate the long-range dependency bottleneck, they remain largely data-agnostic. When applied to financial series they struggle with regime-dependent non-stationary, heavy-tailed noise, and asynchronous cross-asset lead–lag effects, causing attention scores to lock onto transient outliers and degrading out-of-sample robustness \citep{cartea2023detecting, cont2001empirical, miori2022returns}. Our approach departs from this paradigm by embedding each asset’s path in a Rough Path Signature space that is stable under time-reparameterization and robust to micro-structure noise, and by augmenting the attention logits with second-order cross-signature terms that encode the signed-area geometry underpinning lead–lag dynamics. Coupled with scenario-based optimization to hedge against structural breaks, SIT addresses both the generic long-range dependency problem and the finance-specific pathologies that limit existing Transformer forecasters.

\paragraph{Path Signatures in Time Series and Finance}  The path signature, originating from Rough Path Theory, offers a non-parametric and faithful representation of streamed data by summarizing the geometry of a path as a sequence of iterated integrals \citep{lyons1998differential}. A key property is its universality: any continuous function on the space of paths can be arbitrarily well-approximated by a linear function of the signature's terms, making it a powerful basis for feature extraction \citep{chevyrev2016primer}. In practice, the signature is truncated at a finite order $M$, yielding a vector $\mathrm{Sig}^M(\mathbf{X})$ that is robust to irregular sampling due to its invariance to time reparameterization. However, this truncation introduces a trade-off, as the feature dimension grows exponentially with the order $M$ and polynomially with the path dimension $d$, posing a significant computational burden. This challenge has motivated alternatives like signature kernels, which compute inner products in the high-dimensional feature space implicitly, avoiding explicit feature construction \citep{kiraly2019kernels}. In machine learning, signatures provide a potent inductive bias for modeling systems with path-dependent memory. The most direct application involves using truncated signatures as static input features for standard models \citep{gyurko2013extracting}. More sophisticated integrations are found in continuous-time models like Neural Controlled Differential Equations (CDEs), which learn a vector field that is controlled by the input path, effectively modeling the system's response to a driving signal \citep{kidger2020neural}. For finance, a crucial insight arises from the signature's geometry: the second-order terms of a joint signature over two asset paths precisely encode their signed area, a direct and robust measure of their temporal lead-lag relationship \citep{lyons2022signature}. This property has been successfully leveraged to build kernels for detecting asymmetric dependencies between financial instruments, offering a principled alternative to traditional correlation measures \citep{bonnier2019deep}. Recent advancements extend this to attention mechanisms. the Rough Transformer \citep{moreno2024rough} introduces multi-view signature attention to operate directly on continuous-time representations. Also, applications to finance span volatility/return modeling, derivatives, and market microstructure. Early studies extracted signature coordinates to forecast realized volatility and to detect temporal asymmetries \citep{gyurko2013extracting}. In options, signatures parameterize no‑arbitrage dynamics and enable data‑driven pricing/hedging \citep{arribas2020sig}, including transformer‑style encoders fed with log/signatures \citep{tong2023sigformer}. A crucial geometric motif is the second‑order signed area,
\begin{equation}
    A(X^i,X^j)=\int X^i\,dX^j-\int X^j\,dX^i,
\end{equation}
which encodes temporal asymmetry and lead–lag; signature kernels exploit this to compare pairs or small baskets of assets \citep{chevyrev2016primer, kiraly2019kernels}. Our architecture operationalization this motif at scale: SIT injects cross‑asset signature information as a dynamic, query‑conditioned bias inside attention, so that pairwise signed‑area evidence modulates which assets attend to which others at each decision point (cf. Theorem \ref{thm:lead-lag-main}). While signatures mitigate non‑stationarity and encode higher‑order interactions, they incur truncation bias and can suffer from a curse of dimensionality as either degree $M$ or the number of assets grows; kernelization trades feature savings for quadratic kernel costs \citep{salvi2021signature, bonnier2019deep}. Compared with state‑space or transformer baselines, signatures offer complementary bias—geometric invariances and lead–lag structure—rather than longer receptive fields alone. Prior signature‑based works typically (i) use signatures as fixed inputs or kernels outside the attention mechanism and (ii) optimize predictive losses, not portfolio objectives \citep{gyurko2013extracting, tong2023sigformer, bonnier2019deep}. SIT differs by coupling signature‑augmented, cross‑asset attention with end‑to‑end CVaR optimization for long‑only, fully‑invested portfolios, aligning representation, interaction, and objective \citep{buehler2019deep}.

\section{Appendix. Notation}
\label{sec:appendix_a} 

For clarity and ease of reference, Table ~\ref{tab:notation_en} provides a comprehensive summary of the key notations used throughout this paper.

\begin{table}[h]
\centering
\begin{tabular}{lll}
\hline
\textbf{Symbol}                    & \textbf{Description}                              & \textbf{Type / Dimension}            \\ \hline
$\R$                               & Set of real numbers                               & —                                    \\
$\E[\cdot]$                        & Expectation operator                              & —                                    \\
$0=t_0<\dots<t_n=T$                & Discrete decision times                           & Scalars                              \\
$d$                                & Number of tradable assets                         & $\in\mathbb{N}$                      \\
$S_{t_i}^j$                        & Price of asset $j$ at time $t_i$                  & Scalar                               \\
$\bS_u$                            & Price vector $(S_u^1,\ldots,S_u^d)$               & $\in\R^{d}$                          \\
$\Omega$                           & Set of market scenarios (price paths)             & Sample space                         \\
$\theta\!\in\!\Theta$              & Trainable parameters / parameter space            & Vector / set                         \\
$H$                                & Look‑back window length (time steps)              & $\in\mathbb{N}$                      \\
$K$                                & Forecasting window length (time steps)            & $\in\mathbb{N}$                      \\
$M$                                & Signature truncation level                        & $\in\mathbb{N}$                      \\
$\Sig^M(\bX_{[s,t]})$              & Truncated signature of path $\bX$ up to level $M$ & $\in\R^{d_{\mathrm{sig}}}$           \\
$\CVaR_\alpha(\cdot)$              & Conditional Value‑at‑Risk at level $\alpha$       & Scalar                               \\
$\bs_{k,j}$                        & Signature vector for slice $k$, asset $j$         & $\in\R^{d_{\mathrm{sig}}}$           \\
$\bfv_t$                           & Calendar/feature vector at time $t$               & $\in\R^{F}$                          \\
$\be_{\mathrm{asset}}^{j}$         & Learnable embedding of asset $j$                  & $\in\R^{d_{\mathrm{model}}}$         \\
$\bx_{k,j}$                        & Input token for slice $k$, asset $j$              & $\in\R^{d_{\mathrm{model}}}$         \\
$\mathbf{Q},\mathbf{K},\mathbf{V}$ & Query, key, value matrices (per slice)            & $\in\R^{d\times d_{\mathrm{model}}}$ \\
$\bbeta_{i,j,l}$                   & Cross‑signature embedding for pair $(j,l)$        & $\in\R^{N_H\times d_\beta}$          \\
$\bq^{\text{dyn}}_{k,j}$           & Dynamic query bias for asset $j$, slice $k$       & $\in\R^{N_H\times d_\beta}$          \\
$\gamma$                           & Positive gate for signature bias                  & $\in\R_{>0}$                         \\
$\{\bw^{(k)}_{t_i}\}_{k=1}^{K}$    & Future portfolio weights at $t_i$ (long‑only)     & Each $\in\R^{d}$, $\sum w=1$         \\
$\{\br_{t_{i+k}}\}_{k=1}^{K}$      & Realized returns for steps $1{:}K$                 & Each $\in\R^{d}$                     \\
$\{L^{(k)}_{t_{i+k}}\}_{k=1}^{K}$  & Portfolio losses for steps $1{:}K$                 & Each scalar                          \\
$\hat{\bmu}^{1:K}_{t_i}$           & Predicted $k$-step-ahead returns for $k=1,\ldots,K$ & $\in\R^{K}$ per asset; stacked as $\in\R^{K\times d}$ \\
$\tau$                             & Softmax temperature (Allocation Concentration)      & $\in\R_{>0}$                         \\ \hline
\end{tabular}
\caption{Summary of the principal notation used throughout the paper.}
\label{tab:notation_en}
\end{table}

\section{Appendix. Mathematical Proofs}

\begin{definition}\label{def:lead-lag}
(Strict Lead-Lag Structure)
Let $\mathbf{X}_t = (X^1_t, X^2_t)$ be a continuous path of bounded variation on $[0, T]$. We say it possesses a \emph{strict lead-lag structure} if there exist an integer $N \ge 1$ and a partition $0 = t_0 < t_1 < \dots < t_{2N} = T$ of the interval $[0, T]$ such that the following conditions hold:

\begin{enumerate}
    \item[(i)] For each $k \in \{0, 1, \dots, N\}$, the coordinates coincide at the even-indexed partition points: $X^1_{t_{2k}} = X^2_{t_{2k}}$. Let this common value be denoted by $S_k$.
    \item[(ii)] For each $k \in \{1, 2, \dots, N\}$:
        \begin{itemize}
            \item On $[t_{2k-2}, t_{2k-1}]$ (the $k$-th lead interval), $X^1_t$ varies to satisfy $X^1_{t_{2k-1}} = S_k$, while $X^2_t$ remains constant at $S_{k-1}$.
            \item On $[t_{2k-1}, t_{2k}]$ (the $k$-th lag interval), $X^1_t$ remains constant at $S_k$, while $X^2_t$ varies to satisfy $X^2_{t_{2k}} = S_k$.
        \end{itemize}
    \item[(iii)] For each $k \in \{1, 2, \dots, N\}$, the change between synchronization points is non-zero, i.e., $S_k \neq S_{k-1}$.
\end{enumerate}
\end{definition}

\begin{theorem}\label{thm:lead-lag-main-app}
(Strict Lead-Lag Implies Positive Second-Order Signature)
Let $\mathbf{X}_t = (X^1_t, X^2_t)$ for $t \in [0, T]$ satisfy the strict lead-lag structure of Definition~\ref{def:lead-lag}. Then the second-level signature cross-term
\begin{equation}
    \mathcal{A}(\mathbf{X})
    =
    \int_0^T X^1_t  dX^2_t
    -
    \int_0^T X^2_t  dX^1_t
\end{equation}
is strictly positive. In particular, $\mathcal{A}(\mathbf{X}) > 0$.
\end{theorem}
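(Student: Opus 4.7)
The plan is to exploit the piecewise structure of the path given by Definition~\ref{def:lead-lag}: on each subinterval of the partition $\{t_0,\dots,t_{2N}\}$ exactly one coordinate is held constant, so only one of the two Stieltjes integrals in $\mathcal{A}$ contributes. I would therefore split both integrals along the partition and compute each slice in closed form, then sum.

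Concretely, on a \emph{lead} interval $[t_{2k-2},t_{2k-1}]$ the coordinate $X^2$ is constant at $S_{k-1}$, so $dX^2=0$ kills $\int X^1\,dX^2$, while $\int X^2\,dX^1 = S_{k-1}(X^1_{t_{2k-1}}-X^1_{t_{2k-2}}) = S_{k-1}(S_k-S_{k-1})$. Symmetrically, on a \emph{lag} interval $[t_{2k-1},t_{2k}]$ the coordinate $X^1$ is constant at $S_k$, so $\int X^2\,dX^1=0$ and $\int X^1\,dX^2 = S_k(S_k-S_{k-1})$. Subtracting and summing over $k=1,\ldots,N$ telescopes neatly into
\begin{equation}
\mathcal{A}(\mathbf{X}) \;=\; \sum_{k=1}^{N}\bigl[S_k(S_k-S_{k-1}) - S_{k-1}(S_k-S_{k-1})\bigr] \;=\; \sum_{k=1}^{N}(S_k-S_{k-1})^2.
\end{equation}
Condition (iii) of Definition~\ref{def:lead-lag} guarantees every summand is strictly positive, hence $\mathcal{A}(\mathbf{X})>0$.

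I do not anticipate a real obstacle: the bounded-variation assumption makes all Stieltjes integrals well-defined and additive over the partition, and the per-slice vanishing of one differential is immediate from the definition. The only point deserving a brief justification is the legitimacy of splitting $\int_0^T$ along $\{t_k\}$ and then evaluating $\int X^2\,dX^1$ over a lead slice as $X^2_{t_{2k-2}}\cdot\Delta X^1$, which follows because $X^2$ is constant on that slice; analogously on lag slices. Once these observations are in place, the identity $\mathcal{A}(\mathbf{X})=\sum_k (S_k-S_{k-1})^2$ and the conclusion are essentially a one-line consequence. This also makes transparent the geometric interpretation advertised in the main text: $\mathcal{A}$ is twice the signed area swept by the lead-lag staircase, which is manifestly positive when the leader consistently moves before the laggard.
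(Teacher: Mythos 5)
Your proposal is correct and follows essentially the same route as the paper's proof: split both Stieltjes integrals along the partition, use the vanishing of one differential on each lead/lag slice to evaluate the contributions as $S_{k-1}\,\Delta S_k$ and $S_k\,\Delta S_k$, and sum to obtain $\mathcal{A}(\mathbf{X})=\sum_{k=1}^N(\Delta S_k)^2>0$ via condition (iii). No gaps.
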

\begin{proof}
Let $\mathbf{X}_t = (X^1_t, X^2_t)_{t \in [0, T]}$ be a path of bounded variation with the strict lead-lag structure of Definition~\ref{def:lead-lag}. By this structure, there exists a partition $0=t_0 < t_1 < \cdots < t_{2N}=T$ such that on each interval $[t_{2k-2},t_{2k-1}]$ only $X^1$ varies (while $X^2$ remains constant), and on the following interval $[t_{2k-1},t_{2k}]$ only $X^2$ varies (while $X^1$ is constant). Moreover, at the synchronization times $t_{2k}$ both coordinates coincide, and no increment is zero.

Recall from Definition~\ref{def:lead-lag} the common values at the synchronization points:
\begin{equation}
S_{k-1} = X^1_{t_{2k-2}} = X^2_{t_{2k-2}}
\quad\text{and}\quad
S_k = X^1_{t_{2k}} = X^2_{t_{2k}}.
\end{equation}

Then $S_k \neq S_{k-1}$ by strictness. Let $\Delta S_k := S_k - S_{k-1}$. By construction, on $[t_{2k-2},t_{2k-1}]$ (the $k$-th lead step) $X^1$ varies from $S_{k-1}$ to $S_k$ while $X^2$ stays at $S_{k-1}$; on $[t_{2k-1},t_{2k}]$ (the lag step) $X^1$ remains $S_k$ while $X^2$ moves from $S_{k-1}$ to $S_k$.

Now we compute the cross-integral:
\begin{equation}
\mathcal{A}(\mathbf{X}) = \int_0^T X^1_t  dX^2_t  - \int_0^T X^2_t  dX^1_t.
\end{equation}

Using the piecewise structure, we have for each $k$:
\begin{align}
\int_{t_{2k-2}}^{t_{2k}} X^1_t  dX^2_t
&= \int_{t_{2k-1}}^{t_{2k}} X^1_t  dX^2_t && \text{(since $dX^2_t = 0$ on $[t_{2k-2}, t_{2k-1}]$)} \\
&= S_k \left[X^2_{t_{2k}} - X^2_{t_{2k-1}}\right] && \text{(since $X^1_t = S_k$ is constant on $[t_{2k-1}, t_{2k}]$)} \\
&= S_k \Delta S_k.
\end{align}
Similarly,
\begin{align}
\int_{t_{2k-2}}^{t_{2k}} X^2_t  dX^1_t
&= \int_{t_{2k-2}}^{t_{2k-1}} X^2_t  dX^1_t && \text{(since $dX^1_t = 0$ on $[t_{2k-1}, t_{2k}]$)} \\
&= S_{k-1} \left[X^1_{t_{2k-1}} - X^1_{t_{2k-2}}\right] && \text{(since $X^2_t = S_{k-1}$ is constant on $[t_{2k-2}, t_{2k-1}]$)} \\
&= S_{k-1} \Delta S_k.
\end{align}

Summing over $k=1$ to $N$ and subtracting:
\begin{align}
\mathcal{A}(\mathbf{X})
&= \sum_{k=1}^N \left( S_k\Delta S_k - S_{k-1}\Delta S_k \right) \\
&= \sum_{k=1}^N (S_k - S_{k-1})\Delta S_k \\
&= \sum_{k=1}^N (\Delta S_k)^2.
\end{align}

Thus $\mathcal{A}(\mathbf{X}) = \sum_{k=1}^N (\Delta S_k)^2$. Since $S_k \neq S_{k-1}$ for each $k$ by condition (iii), we have $\Delta S_k \neq 0$, so each term $(\Delta S_k)^2$ is strictly positive. Therefore, the sum $\mathcal{A}(\mathbf{X})$ is strictly positive.
\end{proof}

\begin{theorem}[Positive directional derivative of attention weight]\label{thm:attention-deriv-app}
Assume $d\ge 2$, $\gamma>0$, and fix $(k,h,j,l)$.  
Let the query vector $(\bq^{\mathrm{dyn}}_{k,j})_h\in\R^{d_\beta}$ satisfy
$\|(\bq^{\mathrm{dyn}}_{k,j})_h\|_2>0$.  For
\[
  z_{j,m}=\frac{(\bQ_{k,h}\bK_{k,h}^\top)_{j,m}}{\sqrt{d_k}}
           +\gamma\langle(\bq^{\mathrm{dyn}}_{k,j})_h,(\bbeta_{i,j,m})_h\rangle,
  \qquad
  \alpha_{j,m}=\frac{e^{z_{j,m}}}{\sum_{r=1}^de^{z_{j,r}}},
\]
assume $0<\alpha_{j,l}<1$.  
Then the directional derivative of $\alpha_{j,l}$ with respect to 
$\bbeta_{i,j,l}$ in the direction $(\bq^{\mathrm{dyn}}_{k,j})_h$ equals
\begin{equation}
  D^{(\bbeta)}_{(\bq^{\mathrm{dyn}}_{k,j})_h}\alpha_{j,l}
  =
  \gamma\,\alpha_{j,l}\bigl(1-\alpha_{j,l}\bigr)\,
         \bigl\|(\bq^{\mathrm{dyn}}_{k,j})_h\bigr\|_2^{2}
  >0.    
\end{equation}
\end{theorem}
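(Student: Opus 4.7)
The argument is a direct chain-rule calculation organized around a single locality observation: perturbing only $(\bbeta_{i,j,l})_h$ changes only the logit $z_{j,l}$, while all other logits $z_{j,m}$ with $m\neq l$ remain fixed. My plan is to exploit this to collapse the full softmax Jacobian to a single diagonal entry, then read off positivity from the sign hypotheses.

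First, I would isolate the logit sensitivity. Writing $v := (\bq^{\mathrm{dyn}}_{k,j})_h$ and $\beta := (\bbeta_{i,j,l})_h$, the definition of $z_{j,l}$ shows that $\partial_\beta z_{j,l} = \gamma v$, so the directional derivative of $z_{j,l}$ along $v$ equals $\gamma \|v\|_2^{2}$. The remaining logits $z_{j,m}$ with $m\neq l$ are independent of $\beta$, because the cross-signature embeddings $(\bbeta_{i,j,m})_h$ are independent parameter vectors and the dot-product term $(\bQ_{k,h}\bK_{k,h}^\top)_{j,m}/\sqrt{d_k}$ does not involve $\beta$ either.

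Next, I would invoke the standard softmax Jacobian. Since only $z_{j,l}$ moves, only the diagonal entry $\partial \alpha_{j,l}/\partial z_{j,l} = \alpha_{j,l}(1-\alpha_{j,l})$ is needed; the off-diagonal terms $-\alpha_{j,l}\alpha_{j,m}$ drop out of the chain rule. Composing the two derivatives yields
\[ D^{(\bbeta)}_{v}\alpha_{j,l} = \frac{\partial \alpha_{j,l}}{\partial z_{j,l}} \cdot D^{(\beta)}_{v} z_{j,l} = \gamma\,\alpha_{j,l}(1-\alpha_{j,l})\,\|v\|_2^{2}, \]
and strict positivity follows immediately from $\gamma>0$, $0<\alpha_{j,l}<1$, and $\|v\|_2>0$.

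The only subtle point, and not really an obstacle, is recognizing that perturbing the relational embedding for the specific pair $(j,l)$ must be distinguished from perturbing the shared gate $\gamma$. A perturbation of $\gamma$ would move all $d$ logits simultaneously and yield the mixed, sign-indefinite expression $\alpha_{j,l}\bigl(b_{j,l} - \sum_{m}\alpha_{j,m}b_{j,m}\bigr)$ noted in the main text. Keeping the perturbation localized to index $l$ is exactly what makes the off-diagonal softmax cross terms vanish and forces the directional derivative to be unambiguously positive.
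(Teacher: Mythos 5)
Your proposal is correct and follows essentially the same route as the paper's proof: both exploit the locality of $(\bbeta_{i,j,l})_h$ to the single logit $z_{j,l}$, apply the diagonal softmax derivative $\alpha_{j,l}(1-\alpha_{j,l})$, and contract the logit gradient $\gamma(\bq^{\mathrm{dyn}}_{k,j})_h$ with the direction vector to obtain $\gamma\,\alpha_{j,l}(1-\alpha_{j,l})\|(\bq^{\mathrm{dyn}}_{k,j})_h\|_2^2$. Your closing remark contrasting this with a perturbation of the gate $\gamma$ matches the discussion in the main text rather than the appendix proof, but it is accurate and changes nothing.
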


\begin{proof}
For a fixed time slice $k$ and head $h$, the attention weight $\alpha_{k,h,j \to l}$ is the $l$-th component of the softmax function applied to the $j$-th row of the logits matrix. Let $z_{j,m}$ be the logit for query asset $j$ and key asset $m \in \{1, \ldots, d\}$.
\begin{equation}
z_{j,m} = \frac{(\bQ_{k,h} \bK_{k,h}^\top)_{j,m}}{\sqrt{d_k}} + \gamma b_{k,h,j,m}    
\end{equation}
The bias term $b_{k,h,j,l}$ is given by the inner product $b_{k,h,j,l} = \langle (\bq^{\text{dyn}}_{k,j})_h, (\bbeta_{i,j,l})_h \rangle$. The attention weight is:
\begin{equation}
\alpha_{k,h,j \to l} = \frac{\exp(z_{j,l})}{\sum_{m=1}^d \exp(z_{j,m})}
\end{equation}
We wish to compute the directional derivative of $\alpha_{k,h,j \to l}$ with respect to the vector $(\bbeta_{i,j,l})_h$ in the direction of $\mathbf{u} = (\bq^{\text{dyn}}_{k,j})_h$, which is defined as $D_{\mathbf{u}} \alpha_{k,h,j \to l} = \langle \nabla_{(\bbeta_{i,j,l})_h} \alpha_{k,h,j \to l}, \mathbf{u} \rangle$.

First, we find the gradient of $\alpha_{k,h,j \to l}$. By the chain rule,
\begin{equation}
\nabla_{(\bbeta_{i,j,l})_h} \alpha_{k,h,j \to l} = \sum_{m=1}^d \frac{\partial \alpha_{k,h,j \to l}}{\partial z_{j,m}} \nabla_{(\bbeta_{i,j,l})_h} z_{j,m}    
\end{equation}
The relational embedding $(\bbeta_{i,j,l})_h$ only appears in the bias term $b_{k,h,j,l}$, and thus only affects the logit $z_{j,l}$. For any $m \neq l$, $\nabla_{(\bbeta_{i,j,l})_h} z_{j,m} = \mathbf{0}$. Therefore, the sum collapses to a single term:
\begin{equation}
\nabla_{(\bbeta_{i,j,l})_h} \alpha_{k,h,j \to l} = \frac{\partial \alpha_{k,h,j \to l}}{\partial z_{j,l}} \nabla_{(\bbeta_{i,j,l})_h} z_{j,l}    
\end{equation}
The derivative of the softmax function is $\frac{\partial \alpha_{k,h,j \to l}}{\partial z_{j,l}} = \alpha_{k,h,j \to l}(1 - \alpha_{k,h,j \to l})$. The gradient of the logit $z_{j,l}$ with respect to $(\bbeta_{i,j,l})_h$ is:
\begin{equation}
\nabla_{(\bbeta_{i,j,l})_h} z_{j,l} = \nabla_{(\bbeta_{i,j,l})_h} \left( \frac{(\bQ_{k,h} \bK_{k,h}^\top)_{j,l}}{\sqrt{d_k}} + \gamma \langle (\bq^{\text{dyn}}_{k,j})_h, (\bbeta_{i,j,l})_h \rangle \right) = \gamma (\bq^{\text{dyn}}_{k,j})_h    
\end{equation}
Substituting these back, we get the gradient of the attention weight:
\begin{equation}
\nabla_{(\bbeta_{i,j,l})_h} \alpha_{k,h,j \to l} = \gamma \cdot \alpha_{k,h,j \to l}(1 - \alpha_{k,h,j \to l}) \cdot (\bq^{\text{dyn}}_{k,j})_h
    \end{equation}
Now, we compute the directional derivative:
\begin{align}
D_{(\bq^{\text{dyn}}_{k,j})_h} \alpha_{k,h,j \to l} &= \langle \gamma \cdot \alpha_{k,h,j \to l}(1 - \alpha_{k,h,j \to l}) \cdot (\bq^{\text{dyn}}_{k,j})_h, (\bq^{\text{dyn}}_{k,j})_h \rangle \\
&= \gamma \cdot \alpha_{k,h,j \to l}(1 - \alpha_{k,h,j \to l}) \cdot \langle (\bq^{\text{dyn}}_{k,j})_h, (\bq^{\text{dyn}}_{k,j})_h \rangle \\
&= \gamma \cdot \alpha_{k,h,j \to l}(1 - \alpha_{k,h,j \to l}) \cdot \|(\bq^{\text{dyn}}_{k,j})_h\|^2
\end{align}
By assumption, $\gamma > 0$. The attention weight satisfies $0 < \alpha_{k,h,j \to l} < 1$ (for any non-degenerate case with at least two assets), so the term $\alpha_{k,h,j \to l}(1 - \alpha_{k,h,j \to l})$ is strictly positive. By assumption, $(\bq^{\text{dyn}}_{k,j})_h \neq \mathbf{0}$, so its squared norm $\|(\bq^{\text{dyn}}_{k,j})_h\|^2$ is also strictly positive. The product of three strictly positive terms is strictly positive, which concludes the proof.

\end{proof}

\clearpage

\section{Appendix. Implementation details} \label{sec:Implementation} 
To ensure a fair and robust comparison, we perform an extensive hyperparameter search for our proposed SIT model and all baseline models. For each model, we conduct a comprehensive grid search to identify the optimal set of hyperparameters from the search space defined in Table \ref{tab:hyper}. The combination of parameters yielding the best performance on the validation set was selected for the final evaluation on the test set. For all models and experiments, we maintain a consistent set of general training parameters: the Adam optimizer with a learning rate of $10^{-3}$, a batch size of 64, a dropout rate of 0.1. We train all models for a maximum of 100 epochs, utilizing an early stopping mechanism with a patience of 10 epochs to prevent overfitting. 

\begin{table*}[h]
\centering
{\fontsize{8pt}{11pt}\selectfont
\begin{tabular}{ll}
\hline
\multicolumn{2}{l}{\textit{\textbf{Panel A. General Time Series Forecasting Models}}} \\
\textbf{Parameter}                          & \textbf{Values}                         \\ \hline
D\_MODELS                                   & 32, 64, 128, 256                           \\
D\_FFS                                      & 32, 64, 128, 256                           \\
E\_LAYERS\_LIST                             & 1, 2                                    \\
N\_HEADS\_LIST                              & 2, 4, 8                                 \\ \hline
                                            &                                         \\ \hline
\multicolumn{2}{l}{\textit{\textbf{Panel B. Nonstationary Transformer (NSformer)}}}   \\
\textbf{Parameter}                          & \textbf{Values}                         \\ \hline
D\_MODELS                                   & 32, 64, 128, 256                           \\
D\_FFS                                      & 32, 64, 128, 256                           \\
E\_LAYERS\_LIST                             & 1, 2                                    \\
N\_HEADS\_LIST                              & 2, 4, 8                                 \\
P\_HIDDEN                                   & 64, 128, 256                            \\
P\_LAYER                                    & 1,2                                     \\ \hline
                                            &                                         \\ \hline
\multicolumn{2}{l}{\textit{\textbf{Panel C. TimesNet}}}                               \\
\textbf{Parameter}                          & \textbf{Values}                         \\ \hline
D\_MODELS                                   & 32, 64, 128, 256                               \\
D\_FFS                                      & 32, 64, 128, 256                               \\
E\_LAYERS\_LIST                             & 1, 2                                    \\
N\_HEADS\_LIST                              & 2, 4, 8                                 \\
TOP\_K                                      & 3, 5, 7                                 \\ \hline
                                            &                                         \\ \hline

\multicolumn{2}{l}{\textit{{\textbf{Panel D. RFormer}}}}                               \\
\textbf{Parameter}                          & \textbf{Values}                         \\ \hline
Embedding\_Dim                              & 8, 16, 32                              \\
E\_LAYERS\_LIST                             & 1, 2                                    \\
N\_HEADS\_LIST                              & 2, 4, 8                                 \\
Sig\_Level                                  & 2, 3                                 \\ \hline
                                            &                                         \\ \hline

\multicolumn{2}{l}{\textit{\textbf{Panel E. SIT (Ours)}}}                             \\
\textbf{Parameter}                          & \textbf{Values}                         \\ \hline
D\_MODELS                                   & 8, 16, 32, 64                            \\
D\_FFS                                      & 8, 16, 32, 64                              \\
E\_LAYERS\_LIST                             & 1, 2                                    \\
N\_HEADS\_LIST                              & 2, 4, 8                                 \\
{Sig\_Level}                                  & 2                                  \\ 
HIDDEN\_C                                   & 8, 16, 32                               \\ \hline

\end{tabular}}
\caption{The hyperparameter search space for the models used in this study. Each panel shows the parameters and their range of values assigned to a specific model or model group.}
\label{tab:hyper}
\end{table*}

\clearpage

\newcommand{\Prob}{\mathbb{P}}
\newcommand{\arginf}{\operatornamewithlimits{arg\,inf}}
\section{Appendix. Why we choose CVaR?}
\label{sec:appendix-cvar}  

\subsection*{1. Model and Definitions}
Let $\mathcal{S} = \{1, \dots, N\}$ be a finite state space for an integer $N \ge 2$. Let $\mathfrak{P}$ be a probability measure on $\mathcal{S}$ assigning a probability $p_s = \mathfrak{P}(\{s\}) > 0$ to each state $s \in \mathcal{S}$, with $\sum_{s=1}^N p_s = 1$. We designate state $s=1$ as the unique \textbf{crash state}, with probability $p_1 = q \in (0,1)$.

We consider two portfolios, a primary portfolio (PF) and a hedged portfolio (HF), with associated losses given by the random variables $X$ and $Y$, respectively. We denote their specific loss values in state $s$ by $X_s$ and $Y_s$. We impose two structural assumptions on these portfolios:
\begin{enumerate}
    \item \textbf{Crash State Exceptionalism:} The loss of the PF portfolio in the crash state is strictly greater than its loss in any non-crash state. That is, $X_1 > X_s$ for all $s \in \{2, \dots, N\}$.
    \item \textbf{Strict State-wise Dominance:} The HF portfolio is strictly less risky than the PF portfolio in every state. That is, $Y_s < X_s$ for all $s \in \mathcal{S}$.
\end{enumerate}
For a loss variable $Z$ and a \textbf{confidence level} $p \in (0,1)$, the \textbf{Value-at-Risk} is the $p$–quantile
\begin{equation}
    \operatorname{VaR}_p(Z) = \inf\{z \in \mathbb{R} \mid \mathfrak{P}(Z \le z) \ge p\}.    
\end{equation}
The \textbf{Conditional Value-at-Risk} ($\operatorname{CVaR}$), also known as Expected Shortfall, at level $\alpha \in (0,1)$ averages the upper tail of mass $1-\alpha$:
\begin{equation}
    \operatorname{CVaR}_{\alpha}(Z) 
    = \frac{1}{1-\alpha}\int_{\alpha}^{1}\operatorname{VaR}_p(Z)\,dp
    = \min_{\nu \in \mathbb{R}}\Bigl\{\nu + \frac{1}{1-\alpha}\,\E\bigl[(Z-\nu)^+\bigr]\Bigr\}.
\end{equation}
We define the \textbf{risk gap} between the two portfolios at level $\alpha$ as
\begin{equation}
    \Delta_{\alpha} := \operatorname{CVaR}_{\alpha}(X) - \operatorname{CVaR}_{\alpha}(Y).    
\end{equation}

\begin{theorem}[HF dominates PF in CVaR]
Let $\alpha \in (0,1)$ satisfy $1-\alpha < q$ (equivalently, $\alpha > 1-q$). For any portfolios PF and HF satisfying the assumptions above, the risk gap is strictly positive and bounded below by the minimum performance gap:
\begin{equation}
    \Delta_{\alpha} \ge L_{\min},    
\end{equation}
where the \textbf{minimum performance gap} is defined as
\begin{equation}
    L_{\min} := \min_{s\in\mathcal{S}}\bigl(X_s - Y_s\bigr).
\end{equation}
Since $Y_s < X_s$ for all $s$ in the finite set $\mathcal{S}$, it follows that $L_{\min} > 0$, confirming that HF strictly dominates PF in terms of CVaR for this range of $\alpha$.
\end{theorem}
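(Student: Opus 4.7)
The plan is to exploit the condition $1-\alpha < q$, which forces the entire CVaR tail of mass $1-\alpha$ to sit inside the crash event for the loss distribution of $X$. This lets me compute $\CVaR_\alpha(X)$ exactly as $X_1$, while for $Y$ a crude bound by the essential supremum, combined with strict state-wise dominance, is already enough.

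First, I would pin down $\CVaR_\alpha(X) = X_1$. By crash-state exceptionalism, $X_1 = \max_s X_s$ is the strict maximum of the support of $X$. For any $x < X_1$, the CDF of $X$ satisfies
\begin{equation*}
    F_X(x) = \sum_{s:\,X_s \le x} p_s \;\le\; \sum_{s \ne 1} p_s \;=\; 1 - q \;<\; \alpha,
\end{equation*}
where the last strict inequality uses the hypothesis $1-\alpha < q$. Hence $\operatorname{VaR}_p(X) = X_1$ for every $p \in [\alpha, 1)$, and the quantile-integral form of CVaR yields $\CVaR_\alpha(X) = X_1$.

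Next, I would upper-bound $\CVaR_\alpha(Y)$ by $X_1 - L_{\min}$. The monotonicity inequality $\CVaR_\alpha(Y) \le \max_s Y_s$ follows directly from $\operatorname{VaR}_p(Y) \le \max_s Y_s$ for every $p$. By strict state-wise dominance, $Y_s \le X_s - L_{\min}$ for every $s \in \mathcal{S}$, so taking a maximum over $s$ and using $X_1 = \max_s X_s$ gives $\max_s Y_s \le X_1 - L_{\min}$. Combining with the previous step,
\begin{equation*}
    \Delta_\alpha = \CVaR_\alpha(X) - \CVaR_\alpha(Y) \;\ge\; X_1 - (X_1 - L_{\min}) \;=\; L_{\min}.
\end{equation*}

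Positivity of $L_{\min}$ is then immediate: strict dominance makes each $X_s - Y_s$ strictly positive, and the minimum over the finite set $\mathcal{S}$ is attained. The only delicate point is the first step, where I must verify that the CVaR tail is thin enough to be contained in the crash event. This relies on both the strict part of crash exceptionalism (so $X_1$ is isolated as the maximum) and on the hypothesis $\alpha > 1-q$ (so the tail mass fits strictly inside the crash state's probability). Once that identification is in hand, the rest is a one-line application of monotonicity together with the state-wise performance gap.
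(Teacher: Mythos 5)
Your proof is correct. The first step is identical to the paper's: both of you show that for $z < X_1$ the CDF of $X$ is at most $1-q < \alpha$, hence $\operatorname{VaR}_p(X) = X_1$ on the tail $p \in [\alpha,1)$ and $\CVaR_\alpha(X) = X_1$ exactly. You diverge only in how you bound $\CVaR_\alpha(Y)$: the paper applies the state-wise inequality $Y \le X - L_{\min}$ together with two coherence properties of CVaR (monotonicity and translation equivariance), obtaining $\CVaR_\alpha(Y) \le \CVaR_\alpha(X) - L_{\min}$, whereas you bound $\CVaR_\alpha(Y)$ crudely by its essential supremum $\max_s Y_s$ and then push the state-wise gap through the maximum to get $\max_s Y_s \le \max_s X_s - L_{\min} = X_1 - L_{\min}$. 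Both yield the same final bound $\Delta_\alpha \ge L_{\min}$. Your variant is marginally more elementary (it needs only $\operatorname{VaR}_p(Y) \le \max_s Y_s$, no translation equivariance), while the paper's version is the one that would survive if the exact identity $\CVaR_\alpha(X) = X_1$ were weakened — it bounds the gap directly by $L_{\min}$ for any $\alpha$ at which monotonicity and translation equivariance apply, independent of the tail sitting entirely in the crash state. For the theorem as stated, with the hypothesis $\alpha > 1-q$ in force, the two arguments are equally complete.
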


\begin{proof}
We proceed in three steps. First, we compute $\operatorname{CVaR}_{\alpha}(X)$ under the stated tail condition. Second, we upper-bound $\operatorname{CVaR}_{\alpha}(Y)$. Finally, we combine these results.

Exact value of $\operatorname{CVaR}_\alpha(X)$ for $\alpha>1-q$.
Let $F_X(z) = \mathfrak{P}(X \le z)$ be the cumulative distribution function of $X$. By Crash State Exceptionalism, $X_1$ is the unique maximum of $X$. Hence, for any $z < X_1$,
\begin{equation}
    F_X(z) = \mathfrak{P}(X \le z) \le \sum_{s=2}^N p_s = 1-q.
\end{equation}
Therefore, for every $p \in (1-q,\,1]$, the smallest $z$ with $F_X(z)\ge p$ is $z=X_1$, i.e., $\text{VaR}_p(X)=X_1$. If $\alpha>1-q$ (equivalently, the tail mass $1-\alpha<q$), then
\begin{equation}
    \operatorname{CVaR}_{\alpha}(X) 
    = \frac{1}{1-\alpha}\int_{\alpha}^{1}\text{VaR}_p(X)\,dp
    = \frac{1}{1-\alpha}\int_{\alpha}^{1} X_1\,dp
    = X_1.
\end{equation}

Upper bound for $\operatorname{CVaR}_{\alpha}(Y)$.
By definition of $L_{\min}$, we have $X_s - Y_s \ge L_{\min}$ for all $s\in\mathcal{S}$, equivalently
\begin{equation}
    Y \le X - L_{\min}\quad\text{(state-wise).}
\end{equation}
Two standard properties of $\CVaR$ at a fixed level $\alpha$ are:
\begin{enumerate}
    \item \textbf{Monotonicity:} If $Z_1 \le Z_2$ state-wise, then $\CVaR_\alpha(Z_1) \le \CVaR_\alpha(Z_2)$.
    \item \textbf{Translation Equivariance:} For any constant $c \in \mathbb{R}$, $\CVaR_\alpha(Z-c) = \CVaR_\alpha(Z) - c$.
\end{enumerate}
Applying these to $Y \le X - L_{\min}$ yields
\begin{equation}
    \CVaR_\alpha(Y) \le \CVaR_\alpha(X - L_{\min})
    = \CVaR_\alpha(X) - L_{\min}
    = X_1 - L_{\min}.
\end{equation}

So, to get the risk gap, we combine the steps mentioned above.
\begin{equation}
    \Delta_{\alpha} = \CVaR_\alpha(X) - \CVaR_\alpha(Y) \ge X_1 - (X_1 - L_{\min}) = L_{\min} > 0.
\end{equation}
This completes the proof.
\end{proof}

\section{Details of Predict-then-Optimize Baselines} \label{sec:appendix_pto}

The deep learning baselines evaluated in our experiments operate under a two-stage predict-then-optimize approach. Unlike SIT, these baselines treat the two tasks as disjoint stages. This section details the mathematical formulation of this process.

\paragraph{Stage 1: Return Prediction via MSE} In the first stage, a forecasting model $f_{\theta}$ is trained to minimize the statistical discrepancy between the predicted returns and the ground truth. Let $\mathbf{X}_t$ denote the lookback window of historical asset features at time $t$, and $\mathbf{r}_{t+1} \in \mathbb{R}^d$ denote the realized returns at time $t+1$. The model parameters $\theta$ are optimized using the Mean Squared Error (MSE) loss function:

\begin{equation}
    \mathcal{L}_{\text{MSE}}(\theta) = \frac{1}{T} \sum_{t=1}^{T} \| \mathbf{r}_{t+1} - \hat{\mathbf{r}}_{t+1} \|_2^2
\end{equation}

where $\hat{\mathbf{r}}_{t+1} = f_{\theta}(\mathbf{X}_t)$ is the point forecast of the asset returns. The training process focuses solely on maximizing predictive accuracy (minimizing $L_2$ distance) without considering the downstream portfolio risk metric or the covariance structure between assets.

\paragraph{Stage 2: Portfolio Optimization via Mean-CVaR} In the second stage, the trained forecasting model is frozen. Its output $\hat{\mathbf{r}}_{t+1}$ is treated as the vector of expected returns to construct the portfolio. To ensure a fair comparison with our proposed method, we employ a CVaR optimization framework. The solver seeks a portfolio weight vector $\mathbf{w}_t$ that minimizes the Conditional Value-at-Risk ($\text{CVaR}$) while satisfying a target return constraint derived from the prediction $\hat{\mathbf{r}}_{t+1}$.

The optimization problem at time $t$ is formulated as follows:

\begin{equation}
\begin{aligned}
    & \underset{\mathbf{w} \in \Delta^d, \zeta \in \mathbb{R}}{\text{minimize}} 
    & & \zeta + \frac{1}{(1-\alpha)S} \sum_{s=1}^{S} \left[ -(\mathbf{w}^\top \mathbf{r}_{s} ) - \zeta \right]^+ \\
    & \text{subject to} 
    & & \mathbf{w}^\top \hat{\mathbf{r}}_{t+1} \geq \mu_{\text{target}}, \\
    & & & \mathbf{w} \in \mathcal{W}
\end{aligned}
\label{eq:baseline_optim}
\end{equation}

{Here, $\Delta^d$ represents the simplex of valid portfolio weights (e.g., $\sum w_i = 1, w_i \ge 0$ for long-only strategies). The risk term $\text{CVaR}_\alpha$ is approximated using $S$ historical scenarios $\mathbf{r}_s$ sampled from the immediate past, and $\zeta$ represents the Value-at-Risk (VaR) auxiliary variable.

\clearpage

\section{Appendix. Additional Experiments} \label{sec:appendix_E} 

\begin{table*}[!htbp]
    \centering
{\fontsize{7.5}{8.5}\selectfont
\begin{tabular}{ccccc}
\hline
\multicolumn{5}{l}{\textit{Panel A. Asset 30 Universe (S\&P100)}}                                                                                                                                                                              \\ \hline
Model        & Sharpe                                              & Sortino                                             & MDD                                                 & Wealth                                              \\ \hline
CVaR         & 0.2883                                              & 0.3707                                              & \textbf{0.3499}                                     & 1.1915                                              \\
EW           & {\color[HTML]{3531FF} \textbf{0.5268}}              & {\color[HTML]{3531FF} \textbf{0.6569}}              & 0.3724                                              & {\color[HTML]{3531FF} \textbf{1.5648}}              \\
GMV          & 0.1690                                              & 0.2177                                              & 0.2853                                              & 1.0723                                              \\
HRP          & \textbf{0.4609}                                     & \textbf{0.5711}                                     & {\color[HTML]{FE0000} \textbf{0.3287}}              & \textbf{1.4099}                                     \\
Autoformer   & 0.3228 $\pm$ 0.0549                                 & 0.4500 $\pm$ 0.0840                                 & 0.3782 $\pm$ 0.0062                                 & 1.2989 $\pm$0.1028                                  \\
DLinear      & 0.3929 $\pm$ 0.1294                                 & 0.5399 $\pm$ 0.1758                                 & 0.3863 $\pm$ 0.0266                                 & 1.4235 $\pm$ 0.2587                                 \\
FEDformer    & 0.1594 $\pm$ 0.1323                                 & 0.2162 $\pm$ 0.1790                                 & 0.4345 $\pm$ 0.0319                                 & 1.032 $\pm$ 0.2090                                  \\
iTransformer & 0.2948 $\pm$ 0.0721                                 & 0.3853 $\pm$ 0.0942                                 & 0.4169 $\pm$ 0.0118                                 & 1.2447 $\pm$ 0.1459                                 \\
NSformer     & 0.2227 $\pm$ 0.1535                                 & 0.3070 $\pm $ 0.2126                                & 0.4422 $\pm$ 0.0535                                 & 1.1190 $\pm$ 0.2650                                 \\
PatchTST     & 0.2189 $\pm$ 0.1446                                 & 0.2916 $\pm$ 0.1945                                 & 0.5003 $\pm$ 0.0667                                 & 1.1238 $\pm$ 0.2287                                 \\
TimesNet     & 0.2192 $\pm$ 0.1520                               & 0.2999 $\pm$ 0.2103                                & 0.4434 $\pm$ 0.0311                                 & 1.1213 $\pm$ 0.2853                                 \\
RFormer      & 0.4631 $\pm$ 0.2771                               & 0.5854 $\pm$ 0.2094                                & 0.4561 $\pm$ 0.0501                                 & 1.5566 $\pm$ 0.2038                                 \\

\rowcolor[HTML]{DAE8FC} 
SIT (Ours)   & {\color[HTML]{FE0000} \textbf{0.5496 $\pm$ 0.0552}} & {\color[HTML]{FE0000} \textbf{0.6797 $\pm$ 0.0792}} & {\color[HTML]{3531FF} \textbf{0.3415 $\pm$ 0.0162}} & {\color[HTML]{FE0000} \textbf{1.5678 $\pm$ 0.0973}} \\ \hline
\end{tabular}}
    \caption{Portfolio performance of SIT versus baselines across 30-asset universes. The best, second-best, and third-best results for each metric are highlighted in \textcolor[HTML]{FE0000}{red}, \textcolor[HTML]{3531FF}{blue}, and \textbf{bold}, respectively. SIT consistently delivers superior risk-adjusted returns.}
    \label{appendix_main_table}
\end{table*}

\begin{table*}[!htbp]
    \centering
{\fontsize{7.5}{8.5}\selectfont
\begin{tabular}{ccccc}
\hline
\multicolumn{5}{l}{\textit{Panel A. Asset 10 Universe (DOW30)}}                                                                                                                                                                                                                                                                                                          \\ \hline
Models       & Sharpe Ratio ($\uparrow$)                                                                & Sortino Ratio ($\uparrow$)                                                      & Maximum Drawdown ($\downarrow$)                                                 & Final Wealth Factor ($\uparrow$)                                                         \\ \hline
CVaR         & 0.4584                                                                                   & 0.5617                                                                          & \textbf{0.3053}                                                                 & 1.4341                                                                                   \\
EW           & 0.9123                                                                                   & \textbf{1.1714}                                                                 & 0.3191                                                                          & \textbf{2.4551}                                                                          \\
GMV          & {\color[HTML]{FE0000} \textbf{1.0394}}                                                   & {\color[HTML]{3531FF} \textbf{1.3191}}                                          &  {\color[HTML]{FE0000} \textbf{0.2467}}                                        &   {2.3841}                                                                                   \\
HRP          & 0.8407                                                                                   & 1.0332                                                                          & 0.3104                                                                          & 2.0583                                                                                   \\
Autoformer   &  0.6767 $\pm$ 0.3150          &  0.9581 $\pm$ 0.4848 &  0.4655 $\pm$ 0.0265 &  2.2787 $\pm$ 1.3004           \\
DLinear      &  0.8223 $\pm$ 0.1251          &  0.9789 $\pm$ 0.1692 &  0.4240 $\pm$ 0.0414 &  {\color[HTML]{3531FF} \textbf{2.4523 $\pm$ 0.6336}}          \\
FEDformer    &  0.7664 $\pm$ 0.0867          &  0.8245 $\pm$ 0.1460 &  0.4948 $\pm$ 0.0116 &  2.0578 $\pm$ 0.7550          \\
iTransformer &  \textbf{0.9458 $\pm$ 0.1279} &  1.1248 $\pm$ 0.2274 &  0.4230 $\pm$ 0.0532 &  2.4016 $\pm$ 1.7240          \\
NSformer     &  0.8863 $\pm$ 0.2525          &  0.9630 $\pm$ 0.4478 &  0.4733 $\pm$ 0.0825 &  2.1044 $\pm$ 1.1644          \\
PatchTST     &  0.7815 $\pm$ 0.1745          &  0.9712 $\pm$ 0.2462 &  0.4133 $\pm$ 0.0224 &  2.1454 $\pm$ 0.8895          \\
TimesNet     &  0.4249 $\pm$ 0.2673          &  0.5876 $\pm$ 0.3658 &  0.6326 $\pm$ 0.0967 &  1.6655 $\pm$ 0.6016          \\
RFormer      &  0.8605 $\pm$ 0.1936          &  1.1928 $\pm$ 0.2708 &  0.3615 $\pm$ 0.0407 &  2.1120 $\pm$ 0.5219          \\

\rowcolor[HTML]{DAE8FC} 
SIT (Ours)   & {\color[HTML]{3531FF} \textbf{1.0312 $\pm$ 0.0671}}                                      & {\color[HTML]{FE0000} \textbf{1.3798 $\pm$ 0.1049}}                             & {\color[HTML]{3531FF} \textbf{0.2766 $\pm$ 0.0413}}                             & {\color[HTML]{FE0000} \textbf{2.8674 $\pm$ 0.2263}}                                      \\ \hline
\multicolumn{5}{l}{}                                                                                                                                                                                                                                                                                                                                                   \\ \hline
\multicolumn{5}{l}{\textit{Panel B. Asset 20 Universe (DOW30)}}                                                                                                                                                                                                                                                                                                          \\ \hline
Models       & Sharpe Ratio ($\uparrow$)                                                                & Sortino Ratio ($\uparrow$)                                                      & Maximum Drawdown ($\downarrow$)                                                 & Final Wealth Factor ($\uparrow$)                                                         \\ \hline
CVaR         & 0.5453                                                                                   & 0.6871                                                                          & 0.3249                                                                          & 1.5166                                                                                   \\
EW           & \textbf{0.8603}                                                                          & \textbf{1.0472}                                                                 & 0.3503                                                                          & {\color[HTML]{FE0000} \textbf{2.2293}}                                                   \\
GMV          & {\color[HTML]{3531FF} \textbf{0.8618}}                                                   & {\color[HTML]{3531FF} \textbf{1.0730}}                                          & {\color[HTML]{FE0000} \textbf{0.2853}}                                          & 1.9457                                                                                   \\
HRP          & 0.7500                                                                                   & 0.8917                                                                          & \textbf{0.3253}                                                                 & 1.8443                                                                                   \\
Autoformer   &  0.5688 $\pm$ 0.2224          &  0.8312 $\pm$ 0.3437 &  0.4642 $\pm$ 0.0244 &  1.8605 $\pm$ 0.9118          \\
DLinear      &  0.7969 $\pm$ 0.1057          &  0.9339 $\pm$ 0.1475 &  0.3276 $\pm$ 0.0415 &  \textbf{2.1966 $\pm$ 0.4046} \\
FEDformer    &  0.3341 $\pm$ 0.5907          &  0.5471 $\pm$ 0.8805 &  0.4671 $\pm$ 0.0763 &  1.8039 $\pm$ 1.1031          \\
iTransformer &  0.4668 $\pm$ 0.2290          &  0.6682 $\pm$ 0.3666 &  0.6001 $\pm$ 0.0208 &  1.8563 $\pm$ 0.9329          \\
NSformer     &  0.6541 $\pm$ 0.4828          &  0.9751 $\pm$ 0.7710 &  0.5620 $\pm$ 0.0778 &  2.1464 $\pm$ 1.0281          \\
PatchTST     &  0.6828 $\pm$ 0.1866          &  0.9499 $\pm$ 0.2417 &  0.5109 $\pm$ 0.0395 &  2.0649 $\pm$ 0.4307          \\
TimesNet     &  0.2381 $\pm$ 0.2584          &  0.3428 $\pm$ 0.3871 &  0.4919 $\pm$ 0.0511 &  1.2356 $\pm$ 0.5723          \\
RFormer      &  0.7055 $\pm$ 0.1568          &  0.8295 $\pm$ 0.1663 &  0.4514 $\pm$ 0.1046 &  2.0048 $\pm$ 0.6198          \\

\rowcolor[HTML]{DAE8FC} 
SIT (Ours)   & {\color[HTML]{FE0000} \textbf{0.8861 $\pm$ 0.1243}}                                      & {\color[HTML]{FE0000} \textbf{1.0949 $\pm$ 0.1607}}                             & {\color[HTML]{3531FF} \textbf{0.3151 $\pm$ 0.0181}}                             & {\color[HTML]{3531FF} \textbf{2.2039 $\pm$ 0.2983}}                                      \\ \hline
\end{tabular}}
    \caption{{Portfolio performance of SIT versus baselines across 10 and 20-asset universes from DOW30. The best, second-best, and third-best results for each metric are highlighted in \textcolor[HTML]{FE0000}{red}, \textcolor[HTML]{3531FF}{blue}, and \textbf{bold}, respectively. SIT consistently delivers superior risk-adjusted returns.}}
\end{table*}

\begin{table*}[!htbp]
    \centering
{\fontsize{7.5}{8.5}\selectfont
\begin{tabular}{ccccc}
\hline
\multicolumn{5}{l}{\textit{Panel A. Asset 50 Universe (CSI300)}}                                                                                                                                                                     \\ \hline
Models       & Sharpe Ratio ($\uparrow$)                           & Sortino Ratio ($\uparrow$)                          & Maximum Drawdown ($\downarrow$)                     & Final Wealth Factor ($\uparrow$)                    \\ \hline
CVaR         & 0.8292                                              & 1.0038                                              & 0.1220                                              & 1.0971                                              \\
EW           & \textbf{1.1695}                                     & 1.3671                                              & 0.1263                                              & 1.1413                                              \\
GMV          & 1.6717                                              & {\color[HTML]{3531FF} \textbf{2.1255}}              & {\color[HTML]{3531FF} \textbf{0.0942}}              & \textbf{1.1766}                                     \\
HRP          & {\color[HTML]{3531FF} \textbf{1.7428}}              & \textbf{2.0183}                                     & {\color[HTML]{000000} \textbf{0.1136}}              & {\color[HTML]{3531FF} \textbf{1.1825}}              \\
Autoformer   & 0.5834 $\pm$ 0.4666                                 & 0.5923 $\pm$ 0.5446                                 & 0.2441 $\pm$ 0.0779                                 & 0.9079 $\pm$ 0.1257                                 \\
DLinear      & 0.5122 $\pm$ 0.3699                                 & 0.6819 $\pm$ 0.5769                                 & 0.2032 $\pm$ 0.0664                                 & 1.1252 $\pm$ 0.1615                                 \\
FEDformer    & 0.3267 $\pm$ 0.7165                                 & 0.4185 $\pm$ 0.8655                                 & 0.2822 $\pm$ 0.0371                                 & 1.0383 $\pm$ 0.2970                                 \\
iTransformer & 0.6161 $\pm$ 0.1936                                 & 0.8566 $\pm$ 0.2296                                 & 0.2001 $\pm$ 0.0234                                 & 1.0204 $\pm$ 0.1806                                 \\
NSformer     & 0.3010 $\pm$ 0.1859                                 & 0.4132 $\pm$ 0.2395                                 & 0.2889 $\pm$ 0.0689                                 & 1.0775 $\pm$ 0.0704                                 \\
PatchTST     & 0.2789 $\pm$ 0.1646                                 & 0.3368 $\pm$ 0.2040                                 & 0.1913 $\pm$ 0.0124                                 & 1.0394 $\pm$ 0.0442                                 \\
TimesNet     & 0.8213 $\pm$ 0.1636                                 & 1.0533 $\pm$ 0.1929                                 & 0.2855 $\pm$ 0.0954                                 & 1.1700 $\pm$ 0.2386                                 \\
RFormer      & 0.8867 $\pm$ 0.2363                                 & 1.0921 $\pm$ 0.2451                                 & 0.2579 $\pm$ 0.0554                                 & 1.1665 $\pm$ 0.1245                                 \\
\rowcolor[HTML]{DAE8FC} 
SIT (Ours)   & {\color[HTML]{FE0000} \textbf{1.9373 $\pm$ 0.0091}} & {\color[HTML]{FE0000} \textbf{2.3399 $\pm$ 0.1711}} & {\color[HTML]{FE0000} \textbf{0.0964 $\pm$ 0.0046}} & {\color[HTML]{FE0000} \textbf{1.2804 $\pm$ 0.0105}} \\ \hline
\multicolumn{5}{l}{}                                                                                                                                                                                                                 \\ \hline
\multicolumn{5}{l}{\textit{Panel B. Asset 100 Universe (CSI300)}}                                                                                                                                                                    \\ \hline
Models       & Sharpe Ratio ($\uparrow$)                           & Sortino Ratio ($\uparrow$)                          & Maximum Drawdown ($\downarrow$)                     & Final Wealth Factor ($\uparrow$)                    \\ \hline
CVaR         & \textbf{1.5199}                                     & {\color[HTML]{3531FF} \textbf{2.0863}}              & {\color[HTML]{FE0000} \textbf{0.1155}}              & {\color[HTML]{FE0000} \textbf{1.2905}}              \\
EW           & 1.1179                                              & 1.2660                                              & 0.1302                                              & 1.1252                                              \\
GMV          & {\color[HTML]{3531FF} \textbf{1.5365}}              & \textbf{2.0612}                                     & {\color[HTML]{3531FF} \textbf{0.1175}}              & \textbf{1.2724}                                     \\
HRP          & 1.2424                                              & 1.6540                                              & 0.1229                                              & 1.2097                                              \\
Autoformer   & 0.5681 $\pm$ 0.3129                                 & 0.6014 $\pm$ 0.4024                                 & 0.2529 $\pm$ 0.0206                                 & 0.9725 $\pm$ 0.1396                                 \\
DLinear      & 0.7382 $\pm$ 0.4826                                 & 0.8309 $\pm$ 0.4303                                 & 0.2314 $\pm$ 0.0778                                 & 1.0934 $\pm$ 0.3219                                 \\
FEDformer    & 0.4269 $\pm$ 0.4916                                 & 0.5356 $\pm$ 0.6167                                 & 0.2831 $\pm$ 0.0344                                 & 1.0846 $\pm$ 0.1694                                 \\
iTransformer & 0.9865 $\pm$ 0.2055                                 & 1.2495 $\pm$ 0.2386                                 & 0.2492 $\pm$ 0.0741                                 & 1.1169 $\pm$ 0.3491                                 \\
NSformer     & 0.5470 $\pm$ 0.3975                                 & 0.7586 $\pm$ 0.5326                                 & 0.2175 $\pm$ 0.0851                                 & 1.1560 $\pm$ 0.1387                                 \\
PatchTST     & 0.4650 $\pm$ 0.1313                                 & 0.5551 $\pm$ 0.1793                                 & 0.2547 $\pm$ 0.5590                                 & 1.0809 $\pm$ 0.1104                                 \\
TimesNet     & 0.7353 $\pm$ 0.1357                                 & 1.0598 $\pm$ 0.1992                                 & 0.2997 $\pm$ 0.0940                                 & 1.1610 $\pm$ 0.2039                                 \\
RFormer      & 1.0267 $\pm$ 0.2152                                 & 1.2359 $\pm$ 0.2599                                 & 0.2111 $\pm$ 0.0732                                 & 1.2321 $\pm$ 0.1094                                 \\
\rowcolor[HTML]{DAE8FC} 
SIT (Ours)   & {\color[HTML]{FE0000} \textbf{1.8772 $\pm$ 0.0918}} & {\color[HTML]{FE0000} \textbf{2.3637 $\pm$ 0.0936}} & \textbf{0.1199 $\pm$ 0.0048}                        & {\color[HTML]{3531FF} \textbf{1.2777 $\pm$ 0.0214}} \\ \hline
\end{tabular}}
    \caption{{Portfolio performance of SIT versus baselines across 10 and 20-asset universes from CSI300. The best, second-best, and third-best results for each metric are highlighted in \textcolor[HTML]{FE0000}{red}, \textcolor[HTML]{3531FF}{blue}, and \textbf{bold}, respectively. SIT consistently delivers superior risk-adjusted returns.}}
\end{table*}
\clearpage

\section{Appendix. Drawdown with Gamma($\gamma$)} \label{sec:appendix_H} 
\begin{figure*}[ht]
\small{
  \centering
  \includegraphics[width=1.0\linewidth]{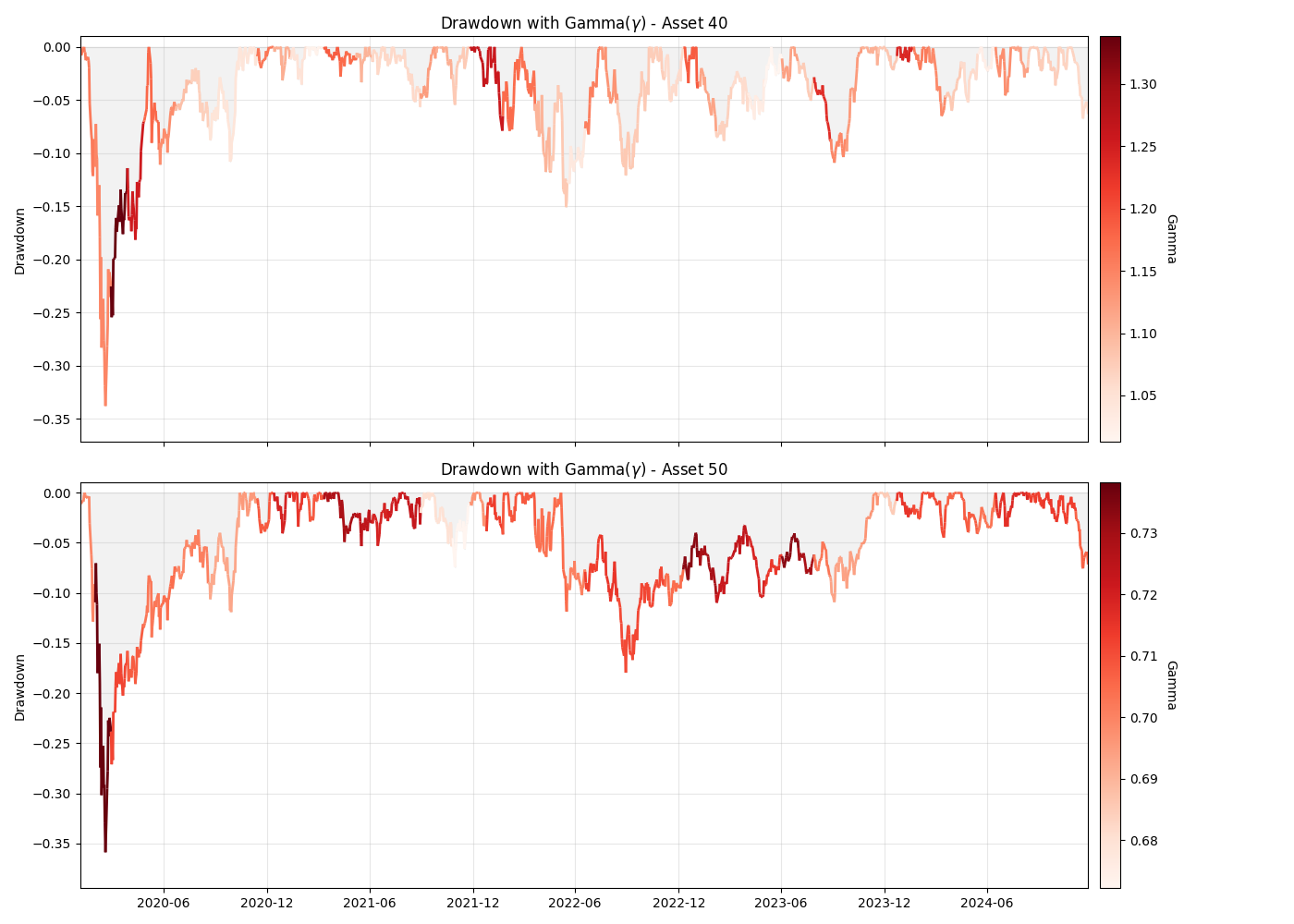}
  \caption{Visual analysis of the dynamic gate $\gamma$ relative to portfolio drawdown over the test period (2020-2024). The plots display the drawdown curves for the 40-asset (top) and 50-asset (bottom) universes, where the line color intensity encodes the magnitude of the learnable scalar $\gamma$. }
  \label{gamma-figure}
 \vspace{-12pt}}
\end{figure*}

\end{document}